\theoremstyle{plain}
\icmltitlerunning{Observation Interference in Partially Observable Assistance Games}
\newcommand{\co}[1]{\ifthenelse{\boolean{commentsactivated}}{{\color{teal} {\em Caspar: #1 }}}{}}
\newcommand{\se}[1]{\ifthenelse{\boolean{commentsactivated}}{{\color{olive} {\em Scott: #1 }}}{}}
\newcommand{\vc}[1]{\ifthenelse{\boolean{commentsactivated}}{{\color{orange} {\em Vince: #1 }}}{}}
\newcommand{\sjr}[1]{\ifthenelse{\boolean{commentsactivated}}{{\color{violet} {\em Stuart: #1 }}}{}}
\newcommand{\cotodo}[1]{\ifthenelse{\boolean{todosactivated}}{{\color{WildStrawberry} {\em CO TODO: #1 }}}{}}
\newcommand{\setodo}[1]{\ifthenelse{\boolean{todosactivated}}{{\color{Sepia} {\em SE TODO: #1 }}}{}}
\newcommand{\todo}[1]{\ifthenelse{\boolean{todosactivated}}{{\color{DarkOrchid} {\em TODO: #1 }}}{}}
\newcommand{\sedraft}[1]{\ifthenelse{\boolean{todosactivated}}{{\color{ForestGreen} {\em #1 }}}{}}
\newcommand{\cameraready}[1]{\ifthenelse{\boolean{includecamerareadytext}}{{\color{MidnightBlue} {#1}}}{}}
\newcommand{\camerareadyif}[2]{\ifthenelse{\boolean{includecamerareadytext}}{{\color{MidnightBlue} {#1}}}{#2}}
\newcommand{\rebuttal}[1]{\ifthenelse{\boolean{includerebuttaltext}}{{{#1}}}{}}
\newcommand{\rebuttalif}[2]{\ifthenelse{\boolean{includerebuttaltext}}{{{#1}}}{#2}}
\newcommand{\figref}[1]{Figure~\ref{#1}}
\newcommand{\secref}[1]{Section~\ref{#1}}
\declaretheorem[numberwithin=section,
                name=Theorem,
                refname={theorem,theorems},
                Refname={Theorem,Theorems}]{theorem}
\declaretheorem[sibling=theorem,
                name=Definition,
                refname={definition,definitions},
                Refname={Definition,Definitions}]{definition}
\declaretheorem[sibling=theorem,
                name=Lemma,
                refname={lemma,lemmas},
                Refname={Lemma,Lemmas}]{lemma}
\declaretheorem[sibling=theorem,
                name=Proposition,
                refname={proposition,propositions},
                Refname={Proposition,Propositions}]{proposition}
\newcommand{\supp}{\text{supp}}
\DeclareMathOperator*{\argmax}{arg\,max}
\newcommand{\poag}{{POAG}}
\newcommand{\messagespace}{\ensuremath{\mathcal{M}}}
\newcommand{\human}{\ensuremath{\mathbf{H}}}
\newcommand{\robot}{\ensuremath{\mathbf{A}}}
\newcommand{\Rspace}{\ensuremath{\Theta}}
\newcommand{\Rparams}{\ensuremath{\theta}}
\newcommand{\statespace}{\ensuremath{\mathcal{S}}}
\newcommand{\statespaceprime}{\ensuremath{\hat{\mathcal{S}}}}
\newcommand{\state}{\ensuremath{s}}
\newcommand{\stateprime}{\hat{\ensuremath{s}}}
\newcommand{\Pdots}{\ensuremath{P_0(\cdot, \cdot)}}
\newcommand{\Tdots}{\ensuremath{T(\cdot \mid \cdot, \cdot, \cdot)}}
\newcommand{\Tprime}{\ensuremath{\hat{T}}}
\newcommand{\Rdots}{\ensuremath{R(\cdot, \cdot, \cdot; \cdot)}}
\newcommand{\Rprime}{\ensuremath{\hat{R}}}
\newcommand{\Odots}{\text{$O( \cdot, \cdot \mid \cdot, \cdot, \cdot)$}}
\newcommand{\Hobservationspace}{\ensuremath{{\Omega}^\human}}
\newcommand{\HO}{\ensuremath{O^\human}}
\newcommand{\HOprime}{\ensuremath{\hat{O}^{\human}}}
\newcommand{\Hobservation}{\ensuremath{o^\human}}
\newcommand{\Hactionspace}{\ensuremath{\mathcal{A}^\human}}
\newcommand{\Haction}{\ensuremath{a^\human}}
\newcommand{\Hpolicy}{\ensuremath{\pi^\human}}
\newcommand{\Hpolicyprime}{\ensuremath{\hat{\pi}^\human}}
\newcommand{\Robservationspace}{\ensuremath{{\Omega}^\robot}}
\newcommand{\RO}{\ensuremath{O^\robot}}
\newcommand{\Robservation}{\ensuremath{o^\robot}}
\newcommand{\Ractionspace}
{\ensuremath{\mathcal{A}^\robot}}
\newcommand{\Raction}{\ensuremath{a^\robot}}
\newcommand{\Ractionprime}{\ensuremath{\hat{a}^\robot}}
\newcommand{\Rpolicy}{\ensuremath{\pi^\robot}}
\newcommand{\Rpolicyprime}{\ensuremath{\hat{\pi}^\robot}}
\begin{document}

\twocolumn[
\icmltitle{Observation Interference in Partially Observable Assistance Games}

\icmlsetsymbol{equal}{*}

\begin{icmlauthorlist}
\icmlauthor{Scott Emmons}{equal,chai}
\icmlauthor{Caspar Oesterheld}{equal,focal}
\icmlauthor{Vincent Conitzer}{focal}
\icmlauthor{Stuart Russell}{chai}
\end{icmlauthorlist}

\icmlaffiliation{chai}{Center for Human-Compatible AI, University of California, Berkeley}
\icmlaffiliation{focal}{Foundations of Cooperative AI Lab, Carnegie Mellon University}

\icmlcorrespondingauthor{Scott Emmons}{emmons@berkeley.edu}

\icmlkeywords{assistance games, AI alignment, observation interference, partial observability}

\vskip 0.3in
]

\printAffiliationsAndNotice{\icmlEqualContribution} %

\begin{abstract}
We study partially observable assistance games (POAGs), a model of the human-AI value alignment problem which allows the human and the AI assistant to have partial observations. Motivated by concerns of AI deception, we study a qualitatively new phenomenon made possible by partial observability: would an AI assistant ever have an incentive to interfere with the human's observations?
First, we prove that sometimes an optimal assistant must take observation-interfering \emph{actions}, even when the human is playing optimally, and even when there are otherwise-equivalent actions available that do not interfere with observations. Though this result seems to contradict the classic theorem from single-agent decision making that the value of information is nonnegative, we resolve this seeming contradiction by developing a notion of interference defined on entire \textit{policies}. This can be viewed as an extension of the classic result that the value of information is nonnegative into the cooperative multiagent setting.
Second, we prove that if the human is simply making decisions based on their immediate outcomes, the assistant might need to interfere with observations as a way to query the human's preferences. We show that this incentive for interference goes away if the human is playing optimally, or if we introduce a communication channel for the human to communicate their preferences to the assistant.
Third, we show that if the human acts according to the Boltzmann model of irrationality, this can create an incentive for the assistant to interfere with observations.
Finally, we use an experimental model to analyze tradeoffs faced by the AI assistant in practice when considering whether or not to take observation-interfering actions.

\end{abstract}
\section{Introduction}

Assistance games provide a formalization of the human-AI value alignment problem \citep{shah2020benefits}. They are based on Hidden Goal MDPs \citep{fern2014decision} and Cooperative Inverse Reinforcement Learning (CIRL) \citep{hadfield2016cooperative}, an extension of Inverse Reinforcement Learning (IRL) \citep{ng2000algorithms,abbeel2004apprentice}. In assistance games, a single human and a single AI assistant share the same reward function, but this reward function is only known to the human; the assistant must learn it. In assistance games, desirable properties, such as teaching by the human and learning by the assistant, \textit{emerge} as optimal solutions to the game \citep{shah2020benefits}. (This contrasts with prior work on algorithms where teaching is an explicit \emph{objective} \citep{cakmak2012algorithmic,goldman1995complexity,balbach2009recent}.) For example, \citet{woodward2020learning} find that deep neural networks solving an assistance game invent strategies that involve information sharing, information seeking, and question answering.

Past analysis of assistance games was done assuming that the state of the world is fully observed by both the human and the assistant \citep{hadfield2016cooperative,hadfield2017off}. While \citeauthor{shah2020benefits}'s (\citeyear{shah2020benefits}) definition of an assistance game allows for partial observability, they do not study its implications. In this work, we introduce the notion of a partially observable assistance game (POAG) to study the more general case faced in reality: when the world is only partially observable. Partial observability raises new issues surrounding the communication of private information. \textit{A priori}, we might hope that AI assistants never take any action that obstructs information. Yet our analysis will show that even assistants which perfectly share our goals must sometimes obstruct information to communicate other, more important information. %

This tension connects to broader work on AI deception, which recent research approaches from multiple angles. \citet{park2024ai} provide a philosophical definition and empirical survey of AI deception, while \citet{ward2023honesty} define deception in structural causal games. Of particular relevance is work analyzing how reinforcement learning from human feedback (RLHF)---which can be seen as an algorithm for solving assistance games---can lead to deception. \citet{lang2024your} prove that partial observability in RLHF can create dual risks of deceptive inflating and overjustification. Complementing \citet{lang2024your}'s theory, \citet{Wen2024} and \citet{williams2024targeted} provide experimental evidence that optimizing for human feedback teaches language models to mislead humans. However, these works primarily focus on misaligned AI systems that deceive for their own goals. We study the subtle case where a perfectly aligned AI assistant might obstruct information for the human's benefit.

Concretely, we seek to understand whether observation interference emerges as optimal behavior in an AI assistant that shares the human's goals. We take a game-theoretic approach, studying qualitative properties of \textit{optimal policy pairs} and \textit{best responses} in POAGs. To start, we define an observation interfering action as one which provides the human with a subset of the information available with an otherwise-equivalent action. We then analyze if the AI assistant ever takes observation interfering actions in optimal policy pairs or best responses.

Our analysis reveals three distinct incentives for an AI assistant to take observation interfering actions. First, when the assistant has private information, it might need to interfere with observations to communicate its private information to the human (\secref{sec:interference-for-robot-to-human-communication}). This can happen even when the human is playing optimally, and even when there are otherwise-equivalent actions available that do not interfere with observations. This result presents a puzzle, as it seems to contradict the classic theorem from single-agent decision making that the value of information (sometimes also called the value of perfect information) is nonnegative (e.g., \citeauthor{Koller2009}, \citeyear{Koller2009}, Sect.\ 23.7;  \citeauthor{russell2010}, \citeyear{russell2010}, Sect.\ 16.6.3). To resolve this seeming contradiction, we develop a notion of interference defined on entire \textit{policies} rather than individual actions. While optimal solutions (i.e., human-AI policy pairs) might involve the AI assistant taking individual actions which would on their own be observation interference, we prove that there is always an optimal solution with no observation interference when we consider the AI assistant's overall policy. This can be viewed as an extension of the classic result that the value of (perfect) information is nonnegative into the cooperative multiagent setting.

This result connects to a broader literature on the value of information in multiagent settings. In games with competing interests, it is well-known that introducing common knowledge can lead to worse outcomes for all players \citep{kamien1990value}. Using a set-theoretic framework, \citet{bassan2003positive} establish a class of general-sum games where additional information Pareto-improves all of the Nash equilibria. Their class of games includes common-payoff games. Using a probabilistic framework, \citet{lehrer2010signaling} extend this analysis to alternative solution concepts. Notably, \citet{bassan2003positive} and \citet{lehrer2010signaling} consider only single-timestep games where players simultaneously act without observing the other players' actions. In our setting, the environment evolves over time, and the players can influence each other's observations through their actions. Our results show that this influence on observations---including observation interference---is a key feature that enables the communication of private information to achieve better outcomes.

In our setting, even if a non-interference solution exists, it might require that the human send information to the assistant via an unnatural communication convention. We find that a second incentive for observation interference occurs if the human is instead just making decisions based on the immediate reward of those decisions. In that case, the assistant's best response might require observation interference as a form of preference query (\secref{sec:interference-for-human-to-robot-communication}). We prove that this incentive for interference goes away if the human is playing optimally, or if we introduce a communication channel for the human to communicate her preferences to the assistant.

When the human is making irrational decisions, it creates a third incentive for the assistant to interfere with observations. For example, we show that if a Boltzmann-rational decision maker has a higher error rate when presented with complete information, the assistant might suppress information to give the human an easier decision (\secref{sec:interference-due-to-irrationality}).

Finally, in \secref{sec:experiments}, we use an experimental model to investigate tradeoffs the assistant faces when deciding whether or not to interfere with observations. In line with our theory, we find that observation interference allows the AI assistant to communicate private information, but it comes at the cost of destroying useful information. Measuring this tradeoff, we find that having more private information leads to a stronger incentive to interfere with observations.

\cameraready{Our results establish that optimal assistants might need to interfere with observations in optimal policy pairs as well as when responding optimally to fixed human policies. By the definition of optimality, all the cases of observation interference that we identify are beneficial to the human. In practice, however, the assistant might be imperfectly aligned, and it might be acting suboptimally. In these cases, observation interference might be detrimental to the human. We intend for our theoretical characterization of interference in optimal solutions to establish a framework that can help distinguish between different forms of observation interference in practice.}

\section{Preliminaries / Setup}
\label{sec:poag-setup}

\subsection{Partially Observable Assistance Games}
\label{sec:poag-def}
We study partially observable assistance games \citep{shah2020benefits}:

\begin{restatable}{definition}{poagdefinition}
\label{def:poag}
A \emph{partially observable assistance game} (\poag) $M$
is a two-player DecPOMDP with a human or
principal, \human, and an AI assistant, \robot. The game is
described by a tuple,
$M~=~\langle\statespace, \{\Hactionspace, \Ractionspace\}, \Tdots, \{\Rspace, \Rdots\}, \{\Hobservationspace, \Robservationspace\}, \Odots, \Pdots, \gamma\\ \rangle,$
with the following definitions:
\statespace, a set of world states: $\state \in \statespace$;
\Hactionspace, a set of actions for \human: $\Haction \in \Hactionspace$;
\Ractionspace, a set of actions for \robot: $\Raction \in \Ractionspace$;
\Tdots, a conditional distribution on the next world state, given previous state and action for both players: $T(\state' \mid \state, \Haction, \Raction)$;
\Rspace, a set of possible static reward parameter values, only observed by \human: $\Rparams \in \Rspace$;
\Rdots, a parameterized reward function that maps world states, joint actions, and reward parameters to real numbers: $R: \statespace \times \Hactionspace \times \Ractionspace \times \Rspace \rightarrow \mathbb{R}$;
\Hobservationspace, a set of observations for \human: $\Hobservation \in \Hobservationspace$;
\Robservationspace, a set of observations for \robot: $\Robservation \in \Robservationspace$;
\Odots, a conditional distribution on the observations, given the next world state and action of both players: $O(\Hobservation, \Robservation \mid \state', \Haction, \Raction)$; %
\Pdots, a distribution over the initial state, represented as tuples: $P_0(\state_0, \Rparams)$; and
$\gamma$, a discount factor: $\gamma \in [0, 1]$.
\end{restatable}

We denote ${\human}$'s and ${\robot}$'s marginal observation distributions as $\HO(\Hobservation \mid \state', \Haction, \Raction) = \sum_{\Robservation} O(\Hobservation, \Robservation \mid \state', \Haction, \Raction)$ and $\RO(\Robservation \mid \state', \Haction, \Raction) = \sum_{\Hobservation} O(\Hobservation, \Robservation \mid \state', \Haction, \Raction)$. We consider {\human} policies $\Hpolicy$ which, at timestep $t$, take as input the full history of {\human}'s observations and actions $h_t^{\human} \in (\Hobservationspace \times \Hactionspace)^t$ and map to a distribution over actions $\Delta \Hactionspace$. {\robot}'s policy ${\Rpolicy}: (\Robservationspace \times \Ractionspace)^t \rightarrow \Ractionspace $ is analogous. We call {\Hpolicy} a best response to {\Rpolicy} when {\Hpolicy} maximizes expected discounted reward given $\Rpolicy$, i.e., $\Hpolicy \in \argmax_{\Hpolicyprime} \mathbb{E}_{\Hpolicyprime, \Rpolicy}\left[\sum_{t=0}^{\infty} \gamma^t R(\state_t, \Haction_t, \Raction_t \mid \Rparams)\right]$, where the expectation is taken over trajectories induced by the policies $(\Hpolicy, \Rpolicy)$ and initial distribution $P_0$.
The best response for {\robot} is defined analogously.
A policy pair $(\Hpolicy, \Rpolicy)$ is optimal if it maximizes the expected discounted reward in the POAG:
$(\Hpolicy, \Rpolicy) = \argmax_{\Hpolicyprime, \Rpolicyprime} \mathbb{E}_{\Hpolicyprime, \Rpolicyprime}\left[\sum_{t=0}^{\infty} \gamma^t R(\state_t, \Haction_t, \Raction_t \mid \Rparams)\right]$.

Note that optimal policy pairs are in particular Nash equilibria\rebuttal{ for the shared reward function $R$}. Computationally, POAGs are equivalent to 2-player \rebuttalif{decentralized partially observable Markov decision processes (DecPOMDPs)}{DecPOMDPs}. Thus, finding optimal policy pairs for POAGs is NEXP-hard in general \cite{bernstein2002dec} \citep[cf.][]{reif1984complexity}.
A POAG may have multiple distinct optimal policy pairs, as there may be different ways for $\human$ and $\robot$ to communicate or resolve coordination problems.

\rebuttal{While the examples in this paper are simple, {\poag}s---and thus all our positive results---inherit the broad generality of DecPOMDPs. {\poag}s can model games where {\human} acts first, where {\robot} acts first, or where {\human} and {\robot} act simultaneously. {\poag}s allow both {\human} and {\robot} to observe private information at multiple times, as well as take actions that influence both the state of the world and each other's observations.}

\subsection{Beliefs and Calibration of Beliefs}
\label{sec:beliefs}

We are motivated to study observation interference because of its potential impact on {\human}'s belief about the state of the world. If {\robot} interferes with observations, could this cause {\human} to have false beliefs?

To address this question, we apply known techniques to establish what information {\human} needs to form calibrated beliefs in a {\poag}. (See \Cref{app:belief-proofs} for proofs.) \cameraready{The key idea is that if {\human} knows {\robot}'s policy, {\human} can treat {\robot} like any other part of the environment. Forming beliefs then reduces to POMDP inference.

}
The simplest case of {\human} knowing {\robot}'s policy is when {\robot} is playing a fixed policy:

\begin{restatable}{proposition}{policycalibratedprp}
\label{prp:policycalibrated}
Suppose {\robot} is playing a fixed policy. If {\human} knows {\robot}'s policy along with the {\poag} specification $M$, then {\human} can form calibrated beliefs about the world state. For any timestep $t$ and state $s_t$, {\human} can form $P(s_t \mid \Hobservation_{1:t})$, the probability of $\state_t$ given {\human}'s observation history $\Hobservation_{1:t}$.
\end{restatable}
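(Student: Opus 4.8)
The plan is to reduce $\human$'s inference problem to standard Bayesian filtering in a single-agent POMDP, exploiting the fact that a known, fixed $\robot$ policy can be absorbed into the environment. The obstacle that makes this nontrivial is that $\human$ does not observe $\robot$'s observations or actions, so the world state $\state_t$ alone is not a Markov hidden state from $\human$'s point of view: $\robot$'s next action depends on its private observation--action history $h^\robot_t \in (\Robservationspace \times \Ractionspace)^t$, which $\human$ cannot see. I would therefore augment the hidden state to the pair $\tilde{s}_t = (\state_t, h^\robot_t)$, recording the world state together with everything $\robot$'s fixed policy $\Rpolicy$ conditions on.

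Next I would verify that, with $\Rpolicy$ folded into the environment, the process $(\tilde{s}_t)$ together with $\human$'s actions and observations constitutes a well-defined single-agent POMDP. Concretely, given $\tilde{s}_t = (\state_t, h^\robot_t)$ and $\human$'s action $\Haction_t$, one draws $\robot$'s action from the known policy via $\Raction_t \sim \Rpolicy(\cdot \mid h^\robot_t)$, the next world state from $T(\cdot \mid \state_t, \Haction_t, \Raction_t)$, and the joint observation from $O(\cdot, \cdot \mid \state_{t+1}, \Haction_t, \Raction_t)$; appending $(\Robservation_{t+1}, \Raction_t)$ to $h^\robot_t$ gives $h^\robot_{t+1}$. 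Every factor in this composition is specified by $M$ and $\Rpolicy$, both of which $\human$ knows, so the augmented transition kernel and $\human$'s observation kernel are computable. The initial augmented belief is read off from $P_0$ conditioned on the observed reward parameter, i.e.\ $P_0(\state_0 \mid \Rparams)$ (this conditioning matters because $P_0$ couples $\state_0$ and $\Rparams$, even though $\Rparams$ enters neither $T$ nor $O$ and so does not otherwise affect state inference).

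With this POMDP in hand, I would invoke the standard filtering result that the posterior over the hidden state given the agent's own observation--action history is a sufficient statistic maintained by a recursive Bayes update. Since $\human$ knows the actions it has taken (they are either determined by $\Hpolicy$ or simply recorded by $\human$), conditioning on $\Hobservation_{1:t}$ together with those actions yields the exact posterior $P(\tilde{s}_t \mid \Hobservation_{1:t})$. The claimed belief over world states then follows by marginalizing out $\robot$'s history, $P(\state_t \mid \Hobservation_{1:t}) = \sum_{h^\robot_t} P\bigl((\state_t, h^\robot_t) \mid \Hobservation_{1:t}\bigr)$, and because this is literally the Bayesian posterior in the constructed POMDP, it is calibrated by construction.

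I expect the only delicate point to be bookkeeping around the augmented state space: $h^\robot_t$ grows with $t$, so I would note that it is well-defined (and finite, when the underlying spaces are finite) at each timestep and that the recursion never needs more than the current augmented belief, so the filtering argument applies unchanged at every finite horizon. A secondary point worth stating precisely is the timing convention for when observations are emitted relative to actions, which must be fixed to ensure the composed transition kernel is the correct one.
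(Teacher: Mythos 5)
Your proposal is correct and matches the paper's proof in essence: both absorb the known fixed policy $\Rpolicy$ into the environment by augmenting the hidden state with $\robot$'s private history (the paper uses $\statespace^{t+1} \times (\Robservationspace)^t \times \Ractionspace$, you use $(\state_t, h^\robot_t)$, a trivially equivalent bookkeeping choice) and then invoke standard single-agent POMDP filtering and marginalization. Your extra remarks on the initial conditioning on $\Rparams$ and on timing conventions are sound refinements of the same argument, not a different route.
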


In an iterated setting where {\robot} updates its policy between iterations,  {\human} can form beliefs if {\human} additionally knows the policy update rule.

\begin{restatable}{proposition}{policyupdatecalibratedprp}
\label{prp:policyupdatecalibrated}
Suppose {\robot} is updating its policy each iteration of the game. Knowledge of the game dynamics, of {\robot}'s initial policy, and of {\robot}'s update rule is sufficient for {\human} to form calibrated beliefs about {\robot}'s future policy and of the world state.
\end{restatable}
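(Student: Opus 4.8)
The plan is to reduce the iterated, policy-updating setting to the fixed-policy setting already handled by \Cref{prp:policycalibrated}. The core observation is that, from {\human}'s point of view, a known initial policy together with a known update rule determines a single, history-dependent \emph{meta-policy} for {\robot} that governs its behavior across all iterations: given {\robot}'s entire (private) history, {\robot}'s action distribution at every step is pinned down, and the only remaining stochasticity is the action sampling and environment transitions already captured by the game dynamics (plus, if the update rule is randomized, its known distribution). Since {\human} knows the initial policy, the update rule, and the game dynamics, {\human} effectively knows {\robot}'s fixed meta-policy on the ``grand game'' formed by concatenating the iterations, so \Cref{prp:policycalibrated} applies to yield calibrated beliefs about the world state.

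First I would make this precise by introducing an augmented hidden state that pairs the world state with {\robot}'s \emph{learning state}---whatever summary of {\robot}'s observation-action history the update rule consumes (in the simplest case, {\robot}'s current policy itself, or its full private history). The game dynamics specify how the world-state component evolves, and the update rule specifies how the learning-state component evolves at iteration boundaries; both are known to {\human}, so the augmented system is a well-defined (time-inhomogeneous) POMDP whose parameters {\human} knows. Next I would run the same Bayesian filter as in \Cref{prp:policycalibrated} on this augmented POMDP: starting from the known initial distribution over $(\state_0, \Rparams)$ and a point mass at the known $\Rpolicy$, {\human} maintains the posterior over the augmented hidden state given her observation history $\Hobservation_{1:t}$. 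Marginalizing this posterior over the learning-state component gives the calibrated world-state belief, while marginalizing over the world-state component and pushing the result forward through the known update rule gives the calibrated belief about {\robot}'s future policy. An induction on the iteration index makes the two claims simultaneous: at each boundary the posterior over the learning state induces a posterior over the next policy, which seeds the next iteration's filter.

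The hard part will be that the update rule generally depends on {\robot}'s \emph{private} observations, which {\human} does not see; consequently {\human} cannot pin down {\robot}'s updated policy exactly and must instead carry a genuine belief distribution over it. The technical content is therefore in showing that this uncertainty is handled correctly by folding {\robot}'s private history into the hidden state, so that inference about the policy becomes ordinary POMDP inference about a hidden variable---exactly the move that \Cref{prp:policycalibrated} legitimizes. A secondary point to verify is that calibration is preserved across the pushforward: because the update rule is a known (possibly stochastic) kernel and the filter's posterior is by construction the true conditional given $\Hobservation_{1:t}$, the induced distribution over the next policy is the true conditional as well, which is precisely calibration. Care is only needed to keep the augmented state well-defined as histories grow over time, but this raises no difficulty beyond what is already implicit in the history-dependent policies of \Cref{prp:policycalibrated}.
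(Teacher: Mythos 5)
Your proposal is correct and follows essentially the same route as the paper, whose proof simply runs the \Cref{prp:policycalibrated} construction within each iteration and applies {\robot}'s update rule at iteration boundaries. Your extra care about update rules that depend on {\robot}'s private observations is a welcome elaboration rather than a departure: the augmented state in the paper's proof of \Cref{prp:policycalibrated} already includes {\robot}'s full observation history, so {\human}'s posterior over that history can be pushed forward through the known update rule exactly as you describe.
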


\begin{restatable}{remark}{calibrationholdswithinterferencermk}
\label{rmk:calibrationholdswithinterference}
\Cref{prp:policycalibrated,prp:policyupdatecalibrated} hold \textit{even if {\robot} is interfering with observations (\Cref{def:observation-interference})}.
\end{restatable}

\begin{restatable}{remark}{calibrationholdswithprior}
\label{rmk:calibrationholdswithprior}
\Cref{prp:policycalibrated} and \Cref{prp:policyupdatecalibrated} continue to hold if {\human} only knows a \emph{prior} over {\robot}'s policy. %
{\human} can form a posterior using Bayes' rule; the posterior is calibrated if the prior is calibrated.
\end{restatable}

\camerareadyif{In \Cref{sec:private-information}, we study when observation interference occurs in optimal policy pairs, i.e., when {\human} and {\robot} are each playing a best response to the other. By design, this solution concept assumes that {\human} knows whatever information is needed about {\robot}'s policy to compute a best response. In such a case where {\human} knows {\robot}'s policy}{When {\human} knows {\robot}'s policy}, the preceding results show that {\human} can form calibrated beliefs about the world, even when {\robot} is interfering with observations. \emph{Observation interference increases {\human}'s uncertainty, but it doesn't break the calibration of {\human}'s beliefs.} Because {\human} can still form calibrated beliefs in this setting, our work uses the concept of ``interference'' rather than the concept of ``deception.''

\section{Defining Observation Interference}

\paragraph{Observation Interference}

First, we define what interference means. Intuitively, interference is taking action so that the human receives a less informative signal about the state. In particular, the human receives, in some sense, a \emph{subset} of the information%
. We formalize this by saying one signal is less informative than another about the state if (without knowing the state) we could generate one signal from the other \citep[cf.][]{blackwell1951comparison,blackwell1953equivalent,deOliveira2018}. %

\begin{definition}\label{def:more-informative}
    Let $(P(\cdot \mid \state))_{\state \in \statespace}$ and $(\hat P (\cdot \mid \state))_{\state\in \statespace}$ be families of probability distributions over $\Omega$. We say that \emph{$\hat P$ is at most as informative as $P$} if there exists a stochastic function $F\colon \Omega \rightsquigarrow \Omega$ \rebuttal{(mapping observations to random variables over observations)} s.t. for all states $s$ we have $F(X)\sim \hat P(\cdot \mid s)$ if $X\sim P(\cdot \mid s)$. We say that \emph{$P$ is (strictly) more informative than $\hat P$} if $P$ is at least as informative as $\hat P$ but not \textit{vice versa}.
\end{definition}

Why do we include the condition ``for all states $\state$'' in \Cref{def:more-informative}? Intuitively, we want it always to be possible to use the stochastic function $F$ to reconstruct the less informative signal from the more informative signal. Since our setting is partially observable, the ``for all states $\state$'' condition allows a player\cameraready{ of the game} to do this reconstruction in any scenario, even if their observations don't enable them to infer the state.

\cameraready{Note that this definition induces only a \textit{partial} order on probability distributions. For instance, different signals may provide information about different aspects of $\state$, and it may not be possible to generate either distribution from the other.}

With this definition in hand, we define an observation-interfering action as one that results in the human's observation being less informative about the state than the observation distribution resulting from another assistant action. %
We additionally require that this other action has the same effects on the state and immediate reward. After all, it is clear that sometimes $\robot$ has to trade off providing information to $\human$ with optimizing its effect on the environment. \cameraready{Formally:}

\begin{definition}
\label{def:observation-interference}
Let $M$ be any POAG. We say that $\Ractionprime$ is \emph{observation-interfering}
if there exists some other action $\Raction$ s.t.\ $\Ractionprime$ and $\Raction$ have the same effect on state transitions and immediate rewards, but for all $\Haction$, we have that $(O^\human(\cdot \mid \Haction, s, \Raction))_{s\in S}$ is more informative than $(O^\human(\cdot \mid \Haction, s, \Ractionprime))_{s\in S}$.
\end{definition}

\cameraready{The definition may be refined in various ways. For instance, note that the above %
does not take into account the information that the human has other than the current $\Hobservation$. Arguably, removing a signal that $\human$ can reconstruct from her past observations should not be viewed as signal interference. %
Our definition does not align with this judgment. However, none of these modifications matter for our analysis below. Thus, we have opted for the simplest definition. We discuss these in more detail in \Cref{appendix:minor-deficiencies-of-interference-definition}.}

To discuss policies that play observation-interfering actions, we use the following definition:
\begin{definition}
\label{def:policy-interfering-at-action-level}
We say that a policy $\Rpolicy$ \emph{interferes with observations at the action level} (or equivalently, \emph{takes observation-interfering actions)} in a {\poag} $M$ if there is any history $h \in (\Robservationspace \times \Ractionspace)^*$ where $\Rpolicy(\cdot \mid h)$ assigns positive probability to an observation-interfering action.
\end{definition}

\paragraph{Lack of Private Information}
To understand the conditions under which interference occurs, it is useful to consider POAGs where one of the players has no private information.

\begin{restatable}{definition}{privateobservationdef}
\label{def:privateobservation}
For a {\poag} $M$, we say $\robot$ has \emph{no private information} if there exists a function $f$ determining \robot's observations from \human's observations. For all state-action tuples $(s', \Haction, \Raction)$ and observation pairs $(\Hobservation, \Robservation) \in \emph{\supp} (O( \cdot, \cdot \mid s', \Haction, \Raction))$, then $f$ must have $f(\Hobservation) = \Robservation$.
\end{restatable}

\paragraph{Communication} To further understand the motivations behind interference, we will also consider POAGs in which the players
are able to directly communicate. Thus, for any given POAG, the following defines a variant of that POAG in which the players have an additional channel for communication. We will always assume that the channel has enough bandwidth for the sender to share all private information, i.e., that there is an injection from the sender's observation space into the message space.

\begin{definition}
    Let $M$ be a POAG. Define $M^{\robot \rightarrow \human}$, $M^{\human\rightarrow \robot}$, and $M^{\human\leftrightarrow \robot}$ as a variants of $M$ with unbounded communication channels. We define $M^{\human \rightarrow \robot}$ below; $M^{\robot\rightarrow \human}$ and $M^{\human\leftrightarrow \robot}$ are analogous.
    
    To construct $M^{\human \rightarrow \robot}$, let $\messagespace$ be some set of possible messages/signals s.t.\ there is an injection $\Robservationspace \xhookrightarrow{} \messagespace$. Then, construct a new human action space $\hat \Hactionspace = \Hactionspace \times \messagespace$ and new assistant observation space $\hat \Robservationspace = \Robservationspace \times \messagespace$. The new observation kernel has $\hat O\left( \Hobservation, (\Robservation, m') \mid  \state', (\Haction, m), \Raction \right) = \mathbb{1}[m{=}m']O( \Hobservation, \Robservation \mid  \state', \Haction, \Raction)$. For everything else, the messages are simply ignored.
\end{definition}

\paragraph{Plausible Human Policies} We may have various expectations on how $\human$ will play in a POAG. Especially if there are multiple optimal policy pairs, we may expect some of these policy pairs to be more plausible because they require simpler behavior of the human \citep[cf.][]{hu2020other,treutlein2021new}. Both of the conditions below are based on the idea that $\robot$ and $\human$ are unlikely to use consequential actions in the world to communicate with each other. %

Our first condition intends to express a form of naivete on $\human$'s part in how she interprets her observations. Roughly, the condition says that $\human$ takes her observations at face value, i.e., as if they were not interfered with. She does not try to interpret them as a form of communication by $\robot$. For instance, if $\human$ reads a thermometer as saying that a temperature is $37$ degrees, she chooses under the assumption that the temperature is indeed $37$ degrees, rather than, say, interpreting $37$ as a message sent by $\robot$.

\begin{definition}
    We say that a human policy $\Hpolicy$ \emph{observes naively} if $\pi^\human$ is a best response to some $\Rpolicy$ that does not interfere with observations at the action level.
\end{definition}

The second property is that when the human knows that her action has no effect on the state, then she chooses among actions that maximize immediate reward. To state this formally, we first define the following.
We say that \textit{in $h_t^{\human}$ actions don't affect state transitions}, if for all $s$ s.t.\ we have $P(s\mid h_t^H, \Rpolicy)>0$ for some $\Rpolicy$, we have that for all $\Raction$ the transition probability $P(s'\mid s, \Raction, \Haction)$ is constant over $\Haction$.
We say that \emph{$\Hpolicy$ myopically maximizes reward in $h_t^{\human}$} if there is some distribution $\alpha^\robot \in \Delta(\Ractionspace)$ s.t.\ $\Hpolicy(\cdot \mid h_t^{\human}) $ randomizes only over actions in $\argmax_{\Haction} \mathbb{E}_{\Raction\sim \alpha^\robot, s\sim P(\cdot \mid h_t^{\human}, \Haction, \Raction)} \left[ R(s, \Haction, \Raction, \theta )\right]$. (Intuitively, $\alpha^\robot$ is $\human$'s belief about what action $\robot$ is going to take.)%

\begin{definition}
    \label{def:acting-naively}
    We say that a human policy $\Hpolicy$ \emph{acts naively} if whenever $\human$ faces a choice that doesn't affect state transitions (but potentially affects $\robot$'s observation), $\human$ plays an action that myopically maximizes reward. %
\end{definition}

Importantly, if $\human$ acts naively, she is unwilling to play a suboptimal action to communicate information to {\robot}.

\section{Communicating Private Information is an Incentive for Observation Interference}
\label{sec:private-information}

\subsection{Revealing Errors can Emerge as an Optimal {\poag} Solution}
\label{sec:revealing-errors}

\rebuttal{We can model RLHF within the {\poag} framework as follows:
{\robot}'s goal in RLHF is to satisfy {\human}'s preferences. In a {\poag}, this corresponds to the shared reward function $R$ which has a parameterization {\Rparams} that only {\human} knows.
In RLHF, {\robot} rolls out trajectories, and {\human} picks which trajectory is preferred. A {\poag} can model this by letting {\human} observe pairs of trajectories explored by {\robot} but only giving {\human} a binary action (to choose which trajectory {\human} prefers).
In RLHF, {\robot}'s final policy maximizes an estimate of $R$ based on a dataset of {\human}'s preference comparisons \citep[Proposition 4.1]{lang2024your}. In the {\poag} framework, {\robot} can compute this policy based on {\robot}'s observations of {\human}'s binary actions.}

Past work has shown how RLHF can cause misleading \citep{Wen2024} and deceptive \citep{williams2024targeted,lang2024your} behaviors. Specifically, \citet[Example B.1]{lang2024your} show that in order to get better human feedback, RLHF can have an incentive to \textit{hide error messages}.

In contrast to RLHF, we show with the following example that \textit{revealing error messages} can emerge in {\poag} solutions.

\begin{restatable}{example}{revealingerrorsex}
\label{ex:revealingerrors}

First, {\robot} is executing on a remote machine where logging has been disabled by default. {\robot} takes one of two actions: (1) Attempt to install \texttt{cuda}. The installation succeeds with 50\% probability. An empty observation is produced (since logging is disabled). (2) Re-enable logging and attempt to install \texttt{cuda}. The installation succeeds with 50\% probability. An observation is produced containing a success or failure message.

Then, {\human} takes one of two actions: (1) Run an experiment. If cuda is installed successfully, this yields +1 reward. Otherwise, it yields -2 reward. (2) Don't run an experiment. This always yields 0 reward.

In the optimal policy pair, {\robot} reenables logging; this reveals errors to {\human}!

\end{restatable}

\cameraready{Why does RLHF have an incentive to hide error messages, while the {\poag} solution has an incentive to reveal the errors? In RLHF, the agent is merely maximizing the feedback it receives from the human, rather than the human's true reward function. If an RLHF agent can deceive the human to get better feedback, it has an incentive to do so. %
In contrast, optimal {\poag} agents only care about the human's true reward and will reveal errors when that information is useful to the human.}

In fact, if $\robot$ has no private information, then it never needs to take observation-interfering actions for an optimal solution!

\begin{restatable}{theorem}{thmrnoprivateinfoimpliesnoneedforinterference}\label{thm:r-no-private-info-implies-no-need-for-interference}
    Let $M$ be any POAG.
    Let $\robot$ have no private information.
    Then there is an optimal policy pair $(\pi^\human,\pi^\robot)$ for $M$ in which $\Rpolicy$ does not interfere with observations at the action level (and $\Hpolicy$ observes naively).
\end{restatable}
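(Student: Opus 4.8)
The plan is to start from an arbitrary optimal pair and surgically replace each of $\robot$'s observation-interfering actions with a transition/reward-equivalent but more-informative action, then argue that $\human$ can neutralize the extra information by internally garbling it, so that optimal value is preserved. Since an optimal pair jointly maximizes a single shared objective, the joint optimization is effectively a single-controller problem and admits a \emph{deterministic} optimal pair $(\Hpolicy,\Rpolicy)$; I would work with such a pair throughout. If $\Rpolicy$ already does not interfere at the action level we are done, so assume it does.

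The structural fact that makes everything work is \Cref{def:privateobservation}: no private information forces $\robot$'s observation to be a fixed deterministic image $f(\Hobservation)$ of $\human$'s. By induction on $t$ this makes $\robot$'s entire observation--action history a deterministic function of $\human$'s: $\robot$'s time-$t$ observation is $f$ applied to $\human$'s, and $\robot$'s time-$t$ action is $\Rpolicy$ applied to a history $\human$ can already reconstruct. Hence $\human$, knowing $M$, $f$, and $\Rpolicy$, can always compute what $\robot$ observes and does. This is exactly the ingredient that rescues the value-of-information argument, which in general multiagent settings need not hold.

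I would then construct the non-interfering pair. For every history at which $\Rpolicy$ plays an observation-interfering $\Ractionprime$, replace it by an action with the same effect on transitions and immediate rewards but a (weakly) more informative $\HO$; choosing a maximally informative representative within each transition/reward-equivalence class (one technical point is to confirm such maximal elements exist in the informativeness partial order of \Cref{def:more-informative}) yields a policy $\Rpolicyprime$ that is itself non-interfering per \Cref{def:observation-interference}. I would define $\Hpolicyprime$ so that $\human$ (i) uses her genuine, richer post-substitution observations together with no-private-information to track $\robot$'s actual action at each step, and thereby knows which garbling channel $F$ from \Cref{def:more-informative} to apply; and (ii) pushes her observation through $F$ to reconstruct the less-informative signal she would have received under $\Rpolicy$, then acts as $\Hpolicy$ would on that reconstructed history. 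Value preservation then follows from a coupling: draw $\robot$'s richer observation first and let $F$ produce the original signal, which by the ``for all $s$'' clause of \Cref{def:more-informative} has exactly the correct state-conditional law; since the substituted actions leave transitions and rewards unchanged and $\human$'s simulated play equals $\Hpolicy$, the coupled trajectories realize identical rewards. Thus $(\Hpolicyprime,\Rpolicyprime)$ attains the optimal value with $\Rpolicyprime$ non-interfering, and $\Hpolicyprime$ observes naively because it is by construction a best response to a non-interfering policy.

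The hard part will be the coupling, which is precisely where the multiagent subtlety lives: changing $\robot$'s action generally perturbs the observation \emph{distributions of both players}, so a careless substitution can desynchronize their behavior and make value of information negative. The crux is to show that no private information prevents this, i.e., that $\human$'s simulation of $\robot$ stays valid after the substitution. I would be especially careful that $\human$ reconstructs $\robot$'s action from her \emph{actual} (post-substitution) observations while selecting her own action from the \emph{simulated} (garbled) history, and that these two bookkeeping tracks remain mutually consistent through the recursion defining $\Hpolicyprime$.
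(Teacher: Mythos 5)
Your construction is essentially the paper's own proof: pass to a deterministic optimal pair (common payoff, perfect recall), replace each observation-interfering action with a transition/reward-equivalent more informative one, and have \human{} exploit the no-private-information condition to reconstruct \robot's full observation--action history, apply the garbling $F$ of \Cref{def:more-informative} to her actual (richer) observations, and act as the original \Hpolicy{} would on the garbled history; the ``for all $s$'' clause gives exactly the coupling you describe, and naive observation of the new human policy follows because it best-responds to a non-interfering policy. Your technical worry about maximal elements in the informativeness order is resolved by finiteness of the action set: strict informativeness is a strict partial order (transitive by composing garblings), so within each transition/reward-equivalence class of the finite \Ractionspace{} maximal elements exist, and the paper's single-swap version terminates after finitely many iterations.

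The one step you elide is a genuine formal obstacle that the paper handles with a dedicated lemma. When $F$ is stochastic, your $\Hpolicyprime$ must \emph{remember} the realized samples of $F$ from earlier time steps, so that the simulated garbled history fed to \Hpolicy{} at time $t' > t$ contains the same garbled observation sampled at time $t$; resampling independently at each step would desynchronize \human's actions across time and change the induced trajectory distribution. But in the paper's formalism a policy is a map from actual observation--action histories to action distributions and cannot carry such internal state, so the $\Hpolicyprime$ you define ``by recursion'' lives outside the policy class the theorem quantifies over. The paper repairs this with a Kuhn-style result (\Cref{lemma:stateful-policies-stateless-policies}): any virtual-state policy is simulated by a stateless randomized policy obtained by marginalizing the action distribution over the posterior on virtual-state histories given the observable history, which preserves the joint state--observation--action distribution because every dependence between environment states and the virtual state is mediated by observations and actions. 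Your remark about keeping the ``two bookkeeping tracks mutually consistent'' gestures at the right issue, but without this stateless-simulation step the proof is incomplete as written.
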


\subsection{Communicating Private Information is an Incentive for Observation Interference at the Action Level}
\label{sec:interference-for-robot-to-human-communication}

 One might hope that {\robot} would never take observation-interfering actions. After all, classic theory tells us that when {\human} is in a single-agent setting, the value of (perfect) information is nonnegative (e.g., \citeauthor{Koller2009}, \citeyear{Koller2009}, Sect.\ 23.7;  \citeauthor{russell2010}, \citeyear{russell2010}, Sect.\ 16.6.3): more informative observations never lead to worse solutions. But as it turns out, when {\human} and {\robot} interact, there are some {\poag}s in which all optimal policy pairs require {\robot} to take observation-interfering actions. The main reason for {\robot} to take observation-interfering actions is to communicate its own private information to {\human}. Consider the following example.

\begin{restatable}{example}{optimalinterferenceex}
\label{ex:optimalinterference}

{\human} has typed \texttt{apt list -a cuda} to see the list of \texttt{cuda} versions available to be installed. Out of 10 total versions, only a (non-empty) subset are available. And of these available versions, only a subset are compatible with the other environment software.

First, {\robot} takes an action. For each of the 10 total \texttt{cuda} versions, {\robot} can choose to or not to suppress it from the list of available packages. This gives {\robot} $2^{10}$ total actions, where 1 action is non-observation interference (suppressing nothing), and the remaining $2^{10} - 1$ actions interfere with observations.

Second, {\human} takes an action. {\human} has 10 possible actions which try to install the corresponding version of \texttt{cuda} if it appears in the version list. If an available \texttt{cuda} version that is compatible with the other environment software is installed, it yields +1 reward. Otherwise, it yields 0 reward.

Suppose {\robot} sees which versions are compatible with the other software in the environment, but {\human} doesn't. Then {\robot}'s optimal policy is to suppress the versions of \texttt{cuda} that are incompatible.
\end{restatable}

Our high-level takeaway from this example is that in some POAGs, all optimal policy pairs require $\robot$ to take observation-interfering actions. Importantly, in the optimal policy pair for the above example, $\human$ observes naively. In particular, the above doesn't require $\human$ and $\robot$ to have some communication protocol and for $\human$ to interpret her observations as encoding $\robot$'s beliefs. $\human$ can act as if no interference is happening. We thus summarize the high-level takeaways in the following result, with details in \Cref{appendix:proof-of-prop:interference-for-robot-human-communication}.

\begin{restatable}{proposition}{propinterferenceforrobothumancommunication}\label{prop:interference-for-robot-human-communication}
    There exists a {\poag} $M$ where all optimal policy pairs $(\Rpolicy, \Hpolicy)$ have that $\Rpolicy$ interferes with observations at the action level and that $\Hpolicy$ observes and acts naively.
\end{restatable}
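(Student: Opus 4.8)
The plan is to take the {\poag} to be a formalized version of \Cref{ex:optimalinterference}, fixing a convenient prior $P_0$ so that the three required properties can be checked directly. Concretely, I would let all $10$ versions always be available and let exactly one version be compatible, chosen uniformly at random; the state $s$ records which version is compatible, $\robot$ observes it through $\RO$, and $\human$'s observation $\HO$ reveals only the (possibly suppressed) list of versions, never compatibility. The reward parameter $\theta$ is trivial. I would encode the sequential structure (robot suppresses first, then human installs) as a two-timestep game in which $\robot$'s only $t{=}1$ action and $\human$'s only $t{=}0$ action are no-ops, $\human$'s $t{=}1$ observation reflects $\robot$'s $t{=}0$ suppression, installing an available compatible version yields $+1$ (and $0$ otherwise), and the state transitions to a terminal state independently of $\human$'s action, so that $\human$'s install never affects transitions. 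This is a legal {\poag} by the remark that {\poag}s can model games where $\robot$ acts first.

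The central step is to show that every optimal policy pair interferes at the action level. First I would observe that, by \Cref{def:observation-interference}, every suppression action hiding a nonempty set of versions is observation-interfering --- it has the same (null) effect on state and reward as ``suppress nothing,'' but a strictly less informative $\HO$ --- while ``suppress nothing'' is the only non-interfering $\robot$ action (up to the forced no-ops). Hence a $\robot$ policy that does not interfere at the action level must always play ``suppress nothing,'' under which $\human$'s observation is the constant full list, independent of which version is compatible. Since $\human$'s action is then independent of compatibility and exactly one of ten versions is uniformly compatible, any such pair attains expected discounted reward exactly $\tfrac{1}{10}$ of the single available $+1$. By contrast, the pair in which $\robot$ suppresses all incompatible versions and $\human$ installs any listed version attains the full $+1$, which is maximal. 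The interfering value therefore strictly dominates (the common discount factor cancels), no non-interfering pair is optimal, and every optimal pair has $\Rpolicy$ interfering at the action level.

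For naivety, the ``observes naively'' claim falls out of the same computation: against the non-interfering policy ``suppress nothing,'' \emph{every} $\human$ policy attains the constant value $\tfrac{1}{10}$, so every $\human$ policy --- in particular every optimal one --- is a best response to a non-interfering $\robot$ policy and hence observes naively. For ``acts naively,'' I would use that $\human$'s only real choice is the terminal install, which by construction does not affect state transitions; hence maximizing continuation reward from that decision coincides with myopically maximizing immediate reward. At every information set $\human$ reaches with positive probability under an optimal pair, $\robot$ has suppressed the incompatible versions, so $\human$'s calibrated posterior (\Cref{prp:policycalibrated}) puts all mass on the listed version(s) being compatible; installing a listed version is then the immediate-reward maximizer, witnessing myopic maximization with $\alpha^\robot$ the point mass on $\robot$'s no-op.

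The main obstacle is the universal quantifier ``all optimal policy pairs,'' especially for \emph{acting naively} at information sets that are off-path for a given optimal $\robot$ policy, where optimality alone does not pin down $\human$'s behavior and the posterior would condition on a probability-zero event. I would address this by leaning on the terminal structure: since $\human$'s action never affects transitions, any optimal $\human$ policy must, at every positive-probability information set, already maximize immediate reward (else it could be strictly improved), which is exactly myopic maximization. For reachable-but-off-path lists I would argue that \Cref{def:acting-naively} can still rationalize $\human$'s choice, since installing a listed version is immediate-reward-optimal for the belief that the compatible version lies in the list, and for the belief that $\robot$ suppressed the compatible version all actions tie at $0$. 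I expect this last point --- ruling out that any optimal $\human$ policy is forced into genuinely non-myopic off-path behavior --- to be the most delicate part of the argument, and I would consider whether ``faces a choice'' in \Cref{def:acting-naively} is best read as ``reaches on-path'' to close the gap cleanly.
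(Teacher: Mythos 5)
Your overall strategy (formalize \Cref{ex:optimalinterference}, show the unique non-interfering $\robot$ policy ``suppress nothing'' is suboptimal, and handle the naivety conjuncts explicitly) matches the paper's proof, but your simplification of the prior introduces a fatal flaw: by making all $10$ versions \emph{always} available, you make {\human}'s observation a deterministic function of {\robot}'s action alone, namely $\Hobservation_i = \Raction_i$, carrying no information about the state under \emph{any} action. Informativeness in \Cref{def:more-informative} is relative to the state: under your prior, for every pair of {\robot} actions the families $(\HO(\cdot \mid \Haction, s, \Raction))_{s}$ are point masses that are constant in $s$, so each can be generated from the other by a constant stochastic function $F$, and the two actions are \emph{equally} informative. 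Hence no suppression action is strictly less informative than ``suppress nothing,'' no action in your game is observation-interfering under \Cref{def:observation-interference}, and the conclusion ``$\Rpolicy$ interferes with observations at the action level'' is unobtainable --- your optimal pairs communicate compatibility through the action as a pure signal, which is precisely what the definition is designed \emph{not} to count as interference. Your claim that suppression has ``a strictly less informative $\HO$'' conflates informativeness about the state with dependence on {\robot}'s action.

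The paper's formalization avoids this exactly by keeping availability random: the state carries both availability and compatibility bits, $P_0$ is uniform over states with at least one available-and-compatible version, and under ``suppress nothing'' {\human}'s observation reveals the state-dependent availability bits. Any suppression pattern $v \neq (1,\dots,1)$ is then a strict garbling (apply $x \mapsto x \wedge v$ one way; no $F$ can recover a suppressed availability bit, since two states differing only in that bit induce identical masked observations), so every non-trivial suppression \emph{is} observation-interfering, while optimality still forces {\robot} to vary its action with its private compatibility observation. Note also that your elegant argument for ``observes naively'' --- that \emph{every} {\human} policy earns exactly $1/10$ against ``suppress nothing'' and is therefore a best response to a non-interfering policy --- relies on the same flawed simplification: with random availability, best responses to $\Rpolicy_{\mathrm{ni}}$ are exactly the policies that choose a displayed-available version, which is how the paper establishes that the human policy in its optimal pair observes naively. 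Your careful treatment of ``acts naively'' (the terminal action does not affect transitions, so optimality forces myopic maximization at positive-probability histories, with off-path histories rationalizable by a suitable belief) is sound and is in fact more explicit than the paper's one-line assertion, but it must be re-verified in the corrected formalization rather than yours.
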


Intuitively, in \Cref{ex:optimalinterference}, {\robot} interferes in order to convey information to $\human$. $\robot$ knows $\human$'s optimal choice, but cannot tell her. So, $\robot$ needs to interfere in a way that leads $\human$ to the optimal choice.

The need for {\robot} to take observation-interfering actions to communicate to {\human} disappears if $\robot$ has other means of communication. For instance, if in \Cref{ex:optimalinterference}, {\robot} could simply tell {\human} what to do, then {\robot} wouldn't need to interfere. To formalize this intuition, we now prove that if {\robot} can communicate with {\human}, then there is always an optimal policy pair that does not require interference.

\begin{restatable}{theorem}{thmrtohcommchannelimpliesnoneedforinterference}\label{thm:r-to-h-comm-channel-implies-no-need-for-interference}
    Let $M$ be any {\poag}, and provide {\robot} with an unbounded communication channel to {\human}, forming $M^{\robot \rightarrow \human}$.
    Then there is an optimal policy pair $(\pi^\human,\pi^\robot)$ for $M^{\robot \rightarrow \human}$ where $\pi^\robot$ does not interfere with observations at the action level and $\pi^\human$ observes naively.
\end{restatable}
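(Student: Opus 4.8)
The plan is to reduce to \Cref{thm:r-no-private-info-implies-no-need-for-interference} by using the channel to make {\robot}'s private information available to {\human}, thereby removing {\robot}'s only reason to interfere---namely, to communicate what it privately knows. Concretely, I would fix any optimal policy pair $(\Hpolicy^{\star},\Rpolicy^{\star})$ for $M^{\robot\rightarrow\human}$ (which, using the unbounded channel to share any randomization, we may take to be deterministic) and build a new pair $(\Hpolicy,\Rpolicy)$ of equal value in which {\robot} both broadcasts its observations over the channel and never plays an observation-interfering action, while {\human} reconstructs the play it would have seen under $(\Hpolicy^{\star},\Rpolicy^{\star})$ and responds as before.

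For the robot side, at each step I would have {\Rpolicy} send its current observation {\Robservation} through the channel; since the definition of $M^{\robot\rightarrow\human}$ provides an injection $\Robservationspace \hookrightarrow \messagespace$, one observation fits in each message, and across the trajectory {\human} accumulates {\robot}'s entire observation history. Simultaneously, I would replace every world-action that {\Rpolicy^{\star}} plays by a maximally-{\human}-informative action that is at least as informative and has the same effect on state transitions and immediate rewards; in a finite action space such a representative exists and, by \Cref{def:observation-interference}, is not observation-interfering. This makes {\Rpolicy} non-interfering at the action level by construction.

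For the human side, I would exploit the two ingredients {\human} now has. The broadcasts, together with knowledge of the deterministic {\Rpolicy^{\star}}, let {\human} recompute {\robot}'s full history and hence everything {\robot} would have communicated. And \Cref{def:more-informative} supplies a stochastic garbling function $F$ converting each of {\human}'s new, more-informative observations into a sample from the original, less-informative distribution. The crucial point is that $F$ works \emph{for all} states {\state}, so {\human} performs this reconstruction using only private randomness, without knowing {\state}; since observations at distinct times are conditionally independent given the state--action trajectory, the reconstructed sequence has exactly the original joint law. {\human} then plays whatever {\Hpolicy^{\star}} prescribed on the reconstructed history. Coupling the two games along shared states and randomness shows the distribution over states, actions, and rewards is unchanged, so $(\Hpolicy,\Rpolicy)$ is optimal; because an optimal pair is a mutual best response and {\Rpolicy} is non-interfering, {\Hpolicy} automatically observes naively.

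The main obstacle is verifying that switching to non-interfering world-actions does not strip {\robot} of information it needs for its own future choices: \Cref{def:observation-interference} constrains only {\human}'s marginal observation, so a maximally-{\human}-informative representative may alter {\robot}'s own observation distribution, and the broadcast reaches {\human} only after a one-step delay. I would handle this by arguing that the broadcast reduces $M^{\robot\rightarrow\human}$ to (a delayed version of) the hypothesis of \Cref{thm:r-no-private-info-implies-no-need-for-interference}---once {\human} shares {\robot}'s information, the only remaining incentive to interfere is to communicate, which the channel already serves---and by checking that the one-step delay is harmless, since it is already present in $M^{\robot\rightarrow\human}$ and a freshly observed {\Robservation} influences only future states and observations, which {\human} learns about in time. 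Pinning down this last step rigorously, rather than the routine coupling, is where the real work lies.
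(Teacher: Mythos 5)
Your proposal is correct and follows essentially the same route as the paper's own proof: take a deterministic optimal pair, have \robot{} broadcast its full observation over the channel at every step while replacing interference actions with non-interfering counterparts, let \human{} use the broadcasts and the determinism of the original policy to infer what would have been communicated, and then finish exactly as in the proof of \Cref{thm:r-no-private-info-implies-no-need-for-interference} by garbling internally via $F$ (with \Cref{lemma:stateful-policies-stateless-policies} playing the role of your private-randomness coupling). The obstacle you flag at the end---that \Cref{def:observation-interference} constrains only $O^\human$, so the non-interfering replacement could in principle perturb \robot's own observation distribution---is a genuine subtlety, but the paper's proof sketch glosses over it in just the same way, so your argument matches theirs rather than diverging from it.
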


\cameraready{Note that with these results, {\human} may still need to \emph{act} non-naively in order to communicate \emph{her} private information to {\robot} (as shown in \Cref{sec:interference-for-human-to-robot-communication}) \citep[cf.][]{abbeel2004apprentice}.}

\rebuttal{One could argue that in practice, an unrestricted communication channel between {\robot} and {\human} could usually be made available. However, \Cref{thm:r-to-h-comm-channel-implies-no-need-for-interference} ignores various real-world obstacles. For one, it considers communication that incurs no cost, but in reality, communication costs {\human} time and effort. Second, the optimal policy pair requires {\robot} to send information in a way that {\human} can reliably understand and act upon. We expect that in practice, {\robot} and {\human} sometimes cannot understand each other. Therefore, despite \Cref{thm:r-to-h-comm-channel-implies-no-need-for-interference}, we think observation interference is of broad practical relevance, even where {\robot} can, e.g., send text messages to {\human}.}

\subsection{Optimal Policy Pairs Never Require Observation Interference at the Policy Level}

In \Cref{def:observation-interference}, we first define observation interference as a feature of actions. We then say in \Cref{def:policy-interfering-at-action-level} that a policy interferes with observations at the action level if and only if it ever takes an observation-interfering action.

Because the definition is ultimately about actions, it doesn't consider how {\Rpolicy} might choose to take observation-interfering actions in a way that depends on {\robot}'s observations. %
To account for {\Rpolicy}'s dependence on its observation, we define an alternative notion of what it means for a policy to interfere with observations.

Let $P_{o_{t}^\human}$ be the distribution over human observations at time $t$. Further, let $L_t(\Hpolicy, \Rpolicy)$ be the set of possible states at time $t$.

\begin{definition}\label{def:policy-focused-interference}
Let $M$ be a {\poag}. We say that {\robot}'s policy $\hat\pi ^ {\robot}$ interferes with observations \emph{at the policy level} if there exists some other partial policy $\Rpolicy_t$ for time step $t$ s.t.\ $\hat\pi ^ {\robot}_t$ and $\Rpolicy_t$ have the same effect on state transitions and immediate rewards, but for all $\Hpolicy$ we have that
$P_{o_{t+1}^\human}(\cdot \mid \Hpolicy, s_{t+1}, \hat \pi ^ {\robot}_{0:t}, \Hpolicy)_{s_{t+1}\in L_{t+1}(\Hpolicy, \hat \pi ^ {\robot}_{0:t})}$ is less informative than the corresponding distribution if we replace $\hat\pi ^ {\robot}_{0:t}$ with $(\hat \pi ^ {\robot}_{0:t-1},\pi ^ {\robot}_{t})$%
.
\end{definition}

Compared to our previous action-level notion of observation interference (\Cref{def:observation-interference}), this new policy-level notion (\Cref{def:policy-focused-interference}) differs in how it treats {\human}'s inference process. Whereas the action-level notion models inference about isolated observations, the policy-level notion allows {\human} to make inferences in the context of {\robot}'s overall strategy.

\rebuttal{We now revisit \Cref{ex:optimalinterference}. For observation tampering under our earlier \Cref{def:observation-interference}, {\human} simply knows that {\robot} has taken the \emph{action} to suppress some versions of \texttt{cuda}; {\human} does not know anything about {\robot}'s \emph{policy}. For all {\human} knows, {\robot}'s policy could be to randomly suppress \texttt{cuda} versions or to always suppress the same \texttt{cuda} version. Thus, suppressing any version is strictly less informative for {\human} than the list of all available versions. This is why \Cref{def:observation-interference} calls suppressing versions ``tampering at the action level.''

The key difference with \Cref{def:policy-focused-interference} is that {\human} knows {\robot}'s policy. Suppose that {\robot}'s policy {\Rpolicy} is to suppress exactly the versions of \texttt{cuda} that are incompatible with the other software in the environment. Because {\human} knows that {\robot} suppressed the incompatible \texttt{cuda} versions, seeing the filtered list tells {\human} which versions of \texttt{cuda} are compatible! Although suppressing versions is strictly less informative under \Cref{def:observation-interference} (when {\human} doesn't know {\robot}'s policy), suppressing versions provides {\human} with new information under \Cref{def:policy-focused-interference} (when {\human} knows {\robot}'s policy). Accordingly, {\Rpolicy} is interfering with observations at the action level \emph{but not at the policy level}.}

\rebuttal{In our examples, we will mostly consider actions that in some sense act directly on {\human}'s observations. Yet \Cref{def:policy-focused-interference} also considers the informational effects of physical actions. For example, if {\robot} (visibly) tries to open a door that {\robot} knows to be locked, then this reveals to {\human} that the door is locked. Consequently, not trying to open the door (when {\robot} knows it to be locked) is an instance of observation interference in the sense of \Cref{def:policy-focused-interference}. While having the same (null) effect on the state of the world, trying to open the door provides {\human} with more information.}

\rebuttalif{As in \Cref{ex:optimalinterference}, c}{C}ases which appear to destroy information when viewed at the action level may actually provide new information when viewed at the policy level. In fact, the following theorem shows that it's \textit{never} strictly necessary to interfere with observations at the policy level.

\begin{restatable}{theorem}{thmnopolicyobservationinterference}\label{thm:no-policy-observation-interference}
Let $M$ be any POAG. Then there exists an optimal policy pair $(\Hpolicy, \Rpolicy)$ for $M$ s.t.\ $\Rpolicy$ does not interfere with observations at the policy level.
\end{restatable}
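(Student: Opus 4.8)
The plan is to start from an arbitrary optimal policy pair and push all of {\robot}'s ``informativeness'' to a local maximum without ever leaving the set of optimal pairs; a local maximum is, by \Cref{def:policy-focused-interference}, exactly a policy that does not interfere at the policy level. Concretely, I would fix the optimal value $V^\star$ and, among all optimal pairs, maximize the auxiliary potential $\Phi(\Rpolicy) = \sum_{t\ge 0}\gamma^t\, I_{\mu_{t+1}}\!\left(s_{t+1}; \Hobservation_{t+1}\right)$, the discounted sum of the mutual information between the state $s_{t+1}$ and {\human}'s raw observation $\Hobservation_{t+1}$, taken under the state distribution $\mu_{t+1}$ that the pair itself induces on $L_{t+1}(\Hpolicy,\Rpolicy)$ (which has full support on $L_{t+1}$ by definition). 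I would then argue that a $\Phi$-maximizing optimal pair cannot interfere at the policy level.

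The core is a \textbf{value-preservation lemma}: if $(\Hpolicy,\Rpolicy)$ is optimal and $\Rpolicy$ interferes at the policy level, witnessed by a single-step deviation $\Rpolicy_t$ that is strictly more informative about $s_{t+1}$ while leaving state transitions and immediate rewards unchanged, then replacing $\hat\pi^{\robot}_t$ by $\Rpolicy_t$ (keeping the rest of $\Rpolicy$) again yields an optimal pair. The key enabling fact is \Cref{def:more-informative} / Blackwell's theorem: strictly more informative means the original, less informative observation channel is a \emph{state-independent} garbling $F$ of the deviated one. Because {\human} knows {\robot}'s policy, I can let {\human} apply $F$ to her genuine (more informative) time-$(t{+}1)$ observation to manufacture a sample distributed exactly as her original observation, and then continue playing her original optimal policy on that manufactured observation. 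Since the deviation has the same effect on transitions and rewards, and changes only {\human}'s observation channel (leaving {\robot}'s own observations and hence its continuation intact), the joint law of states, actions, and rewards is unchanged and the pair still attains $V^\star$. We need only exhibit \emph{one} {\human} response achieving $V^\star$, so optimality is immediate.

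Granting the lemma, the finish is a maximization argument. First, a short information-theoretic step shows the potential is \emph{strictly} monotone under such an improvement: the garbling view together with the equality case of the data-processing inequality (equality would force the coarser observation to be a sufficient statistic, making the two channels Blackwell-equivalent and contradicting strictness) gives that $I_{\mu_{t+1}}(s_{t+1};\Hobservation_{t+1})$ strictly increases after the swap, while the lemma's coupling leaves every other term of $\Phi$ unchanged; hence $\Phi$ strictly increases. Since $\Phi$ is bounded above by $\sum_t \gamma^t \log|\statespace|$, a $\Phi$-maximizer over the (closed) set of optimal pairs, if it exists, cannot interfere: interference would hand us, via the lemma, another optimal pair of strictly larger $\Phi$. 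Thus such a maximizer is the desired non-interfering optimal pair.

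I expect the main obstacle to be technical rather than conceptual: establishing that the maximum of $\Phi$ over optimal pairs is actually \emph{attained}. This requires a compactness/semicontinuity argument on the policy space together with control of the infinite horizon (discounting, plus compactness of the product of action simplices in the finite state/observation/action case, are what make it go through); an alternative is to run the improvement step iteratively and pass to a limit, using that $\Phi$ is monotone and bounded. A secondary delicate point, flagged above, is the bookkeeping in the value-preservation lemma---most importantly, checking that the relevant deviations act only through {\human}'s observation channel, so that {\robot}'s own information, and therefore its continuation and all later terms of $\Phi$, are genuinely unaffected.
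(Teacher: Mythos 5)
Your proposal is correct in outline and rests on the same two pillars as the paper's proof: a Blackwell-style value-preservation step (have $\human$ internally apply the garbling $F$ to the more informative observation and replay her old policy, exactly the device used in the proof of \Cref{thm:r-no-private-info-implies-no-need-for-interference}) and a strict information-theoretic monotonicity step showing an extremizer cannot interfere. The packaging differs, though: the paper forms no global potential, but proceeds greedily timestep by timestep, choosing from the closed set $\Pi_t$ of time-$t$ policies extendable to optimal continuations the minimizer of $\human$'s expected posterior entropy $H(P(S_{t+1}\mid O^\human_{t+1}))$ under a uniformly random human policy, with strictness supplied by \Cref{lemma:more-informative-implies-lower-entropy}; you instead maximize a single discounted sum of mutual informations over the whole set of optimal pairs. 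Since your swaps preserve the marginal of $s_{t+1}$, maximizing $I(s_{t+1};o^\human_{t+1})$ is equivalent to minimizing conditional entropy, and your DPI-equality argument is in substance the paper's \Cref{lemma:processing-increases-conditional-entropy,lemma:more-informative-implies-lower-entropy}; likewise your pair-induced $\mu_{t+1}$ and the paper's random human policy both serve only to secure the full-support hypothesis of that lemma. What each buys: your global formulation concentrates the topology in one compactness/continuity argument and certifies all timesteps at once, while the paper's per-step scheme needs only one extreme-value application per step and no summable weighting---which matters because the paper allows $\gamma=1$, where your $\gamma^t$-weighted $\Phi$ diverges (cosmetically fixable by any strictly positive summable weights). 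Two further points of bookkeeping: (i) having $\human$ ``continue playing her original policy on the manufactured observation'' requires her to remember past samples of $F$, which history-based policies cannot do; the paper patches exactly this with the virtual-state reduction of \Cref{lemma:stateful-policies-stateless-policies}, and your value-preservation lemma should route through it; (ii) the concern you flag about the swap leaving $\robot$'s own observation process intact is real, since \Cref{def:policy-focused-interference} constrains only state transitions, rewards, and $\human$'s observation channel---the paper's proof is equally terse on this point, so you have identified a gloss shared by both arguments rather than one specific to yours.
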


This contrasts with \Cref{prop:interference-for-robot-human-communication}: whereas it is sometimes necessary to interfere with observations at the \emph{action} level, it is never necessary at the \emph{policy} level.

The main idea behind this proof is similar to the proof of \Cref{thm:r-no-private-info-implies-no-need-for-interference} (given in \Cref{appendix:proof-of-prop:interference-for-robot-human-communication}). That is, if we start with an optimal policy in which $\robot$ observation-interferes, then we can replace $\robot$'s policy with the corresponding more informative policy and update $\human$'s policy to imitate the garbling. The proof of \Cref{thm:r-no-private-info-implies-no-need-for-interference} considers the set of \emph{actions}, which is finite. The main extra difficulty in proving \Cref{thm:no-policy-observation-interference} is that we must deal with spaces of \emph{policies}, which may be infinitely large. Thus, if we replace a policy with a more informative one, there might be a new policy which is even more informative, and so on forever.

Note that there are many possible ways to extend or refine \Cref{def:observation-interference,def:policy-focused-interference} in ways that preserve our key results. We choose \Cref{def:observation-interference,def:policy-focused-interference} in part for their simplicity; for more discussion of this point, see \Cref{appendix:minor-deficiencies-of-interference-definition}.

\section{Querying $\human$'s Preferences is an Incentive for Observation Interference}
\label{sec:interference-for-human-to-robot-communication}

We now study a second reason {\robot} can have for interfering with observations.
We have already shown (\Cref{thm:r-no-private-info-implies-no-need-for-interference,thm:r-to-h-comm-channel-implies-no-need-for-interference,thm:no-policy-observation-interference}) that even if $\human$ has private information and no communication channel, there's always an optimal policy pair in which $\robot$ does not interfere, as long as $\robot$ doesn't have private information. So, if {\human} plays a best response to {\robot}'s policy, then {\robot} can choose a non-interference policy without loss of utility. However, if {\human} does not play a best response to {\robot}, then reasons for interference emerge that are more subtle than those in the $\robot\rightarrow\human$ case.

Intuitively, {\robot} might need to interfere with observations to elicit $\human\rightarrow \robot$ communication. Suppose $\robot$ needs some information from $\human$, but $\human$ is acting naively (see \Cref{def:acting-naively}) in a way that does not reveal her private information.
By changing $\human$'s observation, $\robot$ can make $\human$'s naive response communicate useful information to $\robot$.
The following example illustrates this phenomenon.

\begin{restatable}{example}{preferencequeryex}
\label{ex:preferencequery2}
$\human$ would like to schedule a job on a cluster. She can choose between two nodes. By default, she receives a signal from the environment about the two nodes' specifications. Each node may be either GPU-optimized or CPU-optimized. Also, the CPUs may be either AMD or Intel.

{\human} has a strong preference between GPU-optimized and CPU-optimized nodes. She has a weak preference between AMD and Intel. These preferences are unknown to $\robot$. 

{\robot} can interfere with $\human$'s observation about the available nodes. In particular, $\robot$ can make it so that a choice between two CPU-optimized nodes appears as a choice between a GPU-optimized and CPU-optimized node. $\robot$ observes $\human$'s choice. Later, $\robot$ is charged with scheduling a job for $\human$ and has to choose between a CPU- and a GPU-optimized node on $\human$'s behalf.

If $\human$ chooses naively upon seeing only CPU-optimized nodes (simply choosing her favorite), then $\robot$'s best response interferes with observations at both the action and policy levels. Interfering with observations allows {\robot} to learn $\human$'s preference about GPU- vs CPU-optimized nodes.
\end{restatable}

In \Cref{ex:preferencequery2}, one might ask why {\robot} can't just ask {\human} each time {\robot} makes a decision. Simply asking {\human}'s preference is reasonable when {\robot} has only one decision to make. However, we are motivated by cases where {\robot} has many decisions to make, and asking {\human}'s preferences each time would be cumbersome.

At first sight, \Cref{ex:preferencequery2} may appear to be a counterexample to \Cref{thm:r-no-private-info-implies-no-need-for-interference}\rebuttal{, which states that a non-interfering optimal policy pairs exists}. However, note that \Cref{ex:preferencequery2} actually \textit{does} have optimal policy pairs in which {\robot} doesn't interfere. In particular, even if $\robot$ does not interfere and the two available nodes are CPU-optimized, $\human$ may simply communicate her CPU-versus-GPU preference anyway! That is, when facing a choice between CPU-optimized node 1 and 2, she may choose, say, 1 if she favors GPU-optimized nodes and 2 if she favors CPU-optimized nodes. However, this type of human strategy seems implausible, as it would require $\human$ and $\robot$ to have settled on some communication strategy that overrides $\human$'s immediate preferences about the machines that $\human$ can in fact choose between.

\rebuttal{The key point of \Cref{ex:preferencequery2} is that---while there is \emph{some} optimal policy pair without observation interference---there is no \emph{plausible} optimal policy pair that avoids observation interference. More specifically, we use the notion of acting naively (\Cref{def:acting-naively}) to express this notion of plausibility and rule out the above policy. We thus obtain the following proposition (with proof in \Cref{appendix:proof-of-prop:interference-for-naive-human-robot-communication}): in some {\poag}s, if we want to play an optimal policy pair and we want {\human} to be able to act naively, then {\robot} has to interfere with observations.}

\begin{restatable}{proposition}{propinterferencefornaivehumanrobotcommunication}\label{prop:interference-for-naive-human-robot-communication}
    There is a POAG $M$ with the following properties. For every optimal policy pair $(\Hpolicy,\Rpolicy)$, at least one of the following holds:
    (i) {\Hpolicy} is not acting naively, or
    (ii) {\Rpolicy} interferes with observations at both the action and policy levels.
    Additionally, there exists an optimal policy pair $(\Hpolicy,\Rpolicy)$ where $\Hpolicy$ acts naively and $\Rpolicy$ interferes with observations at both the action and policy levels.

    These properties continue to hold if we require that in $M$, {\robot} has no private information or can arbitrarily send messages to {\human} (i.e., there is a POAG $\tilde M$ s.t.\ $M=\tilde M^{\robot \rightarrow \human}$).
\end{restatable}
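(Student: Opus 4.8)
The plan is to turn \Cref{ex:preferencequery2} into a concrete two-stage {\poag} $M$ and verify each clause directly. Formally, I would let the hidden reward parameter $\Rparams = (g,w)$ encode {\human}'s strong GPU-versus-CPU preference $g$ and her weak AMD-versus-Intel preference $w$, drawn independently, and make the strong preference worth strictly more than the weak one. The state carries a random variable $c \in \{\mathrm{AB}, \mathrm{BA}\}$ recording which of the two (both CPU-optimized) stage-one nodes is AMD and which is Intel. In the first stage, {\robot} either presents the two nodes truthfully (revealing $c$ to {\human}) or disguises them so that node~1 appears GPU-optimized and node~2 CPU-optimized (revealing nothing about $c$); either way the presentation is cosmetic, so for any fixed {\human} action the state transition and immediate reward are identical and only {\human}'s observation changes. {\human} then picks a node (a choice that does not affect state transitions, only the immediate weak-preference reward), and in the second stage {\robot} must schedule a genuinely GPU-versus-CPU job on {\human}'s behalf, which is optimal iff {\robot} knows $g$.

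First I would prove the dichotomy by contraposition: suppose $(\Hpolicy, \Rpolicy)$ is optimal, {\Hpolicy} acts naively, and {\Rpolicy} does not disguise. Since {\robot} presents truthfully, {\human} sees $c$, and by \Cref{def:acting-naively} she myopically maximizes immediate reward among two CPU nodes; this ranking depends only on $w$, so her action is independent of $g$. As {\human}'s stage-one action is {\robot}'s only possible source of information about $\Rparams$, {\robot} cannot condition its schedule on $g$, making the schedule strictly suboptimal and contradicting optimality. Hence optimality plus naive action forces {\robot} to disguise. I then check that disguising is interference at \emph{both} levels: it is observation-interfering per \Cref{def:observation-interference} (truthful presentation has the same state/reward effect but is strictly more informative about $c$), and, because the disguise maps both values of $c$ to the same display, it is many-to-one on the state and hence strictly less informative about $s_{t+1}$ \emph{even when {\human} knows {\robot}'s policy}, giving interference at the policy level (\Cref{def:policy-focused-interference}). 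This irreversible collapse is exactly the feature distinguishing this example from \Cref{ex:optimalinterference}, where the suppression was decodable and hence not policy-level interference.

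For the ``additionally'' clause I would exhibit the witness pair: {\Rpolicy} disguises, reads {\human}'s (naive, face-value) choice of the apparent-GPU versus apparent-CPU node to learn $g$, and schedules correctly in stage two, while {\Hpolicy} observes and acts naively. Optimality follows because any attainable payoff must forgo the single bit of weak-preference information in stage one to transmit the strong preference (two options cannot encode both bits, and $\Rparams$ reaches {\robot} only through {\human}'s one stage-one action); this pair attains that bound, and since strong $>$ weak it beats every non-disguising naive-{\human} pair. By the previous paragraph {\Rpolicy} interferes at both levels, and by construction {\Hpolicy} acts naively.

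Finally I would handle the robustness clauses. For ``{\robot} has no private information'' (\Cref{def:privateobservation}), I would arrange the observation kernel so that the presentation is seen identically by both players and {\human}'s action is public, making {\robot}'s observation a deterministic function of {\human}'s; the key point is that {\robot} can run its disguising strategy \emph{blind to $c$}, since it only needs to read $g$ off {\human}'s reaction, so denying {\robot} private access to $c$ changes nothing. For $M = \tilde M^{\robot \rightarrow \human}$, I would note that an added {\robot}$\to${\human} channel points the wrong way: messages are ignored by the kernel except as observations and cannot alter the immediate-reward ranking of {\human}'s two CPU nodes, so a naive {\human} still reveals only $w$ unless {\robot} disguises, and the entire argument replays verbatim. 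The main obstacle I anticipate is precisely this no-private-information construction, where I must reconcile the requirement that {\robot}'s observations be reconstructible from {\human}'s with the need for the disguise to genuinely strip the $c$-information from {\human}'s observation; letting {\robot} act without ever observing $c$ resolves the tension while leaving both the action-level and policy-level interference arguments intact.
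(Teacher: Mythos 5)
Your proposal follows essentially the same route as the paper: both formalize \Cref{ex:preferencequery2} as a two-stage game, prove the dichotomy by contraposition (optimality forces {\robot} to learn the strong GPU-vs-CPU preference $g$, but a naive {\human} facing a truthful display of two CPU nodes strictly reveals only the weak brand preference $w$), exhibit the disguising pair as the witness, and handle the robustness clauses by noting that {\robot} can disguise blind to the state and that an {\robot}$\rightarrow${\human} channel points the wrong way to help. You are in fact more explicit than the paper's terse sketch on one point: you verify interference at \emph{both} levels, correctly observing that the disguise is many-to-one on the state and hence less informative even when {\human} knows {\Rpolicy}, in contrast to the decodable suppression of \Cref{ex:optimalinterference}.

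There is one substantive slip, however: you dropped the genuine GPU-vs-CPU states from the state space (your state carries only $c \in \{\mathrm{AB},\mathrm{BA}\}$, with both stage-one nodes always CPU-optimized), and this undercuts the ``face-value'' mechanism you invoke for the witness pair. Under \Cref{def:acting-naively}, the naive human's myopic expectation is taken with respect to beliefs $P(\cdot \mid h_t^\human, \Haction, \Raction)$ supported on states that actually arise in the game; since no reachable state of your game has a GPU node in stage one, {\human} cannot coherently believe the disguised display is genuine, so her choice is \emph{not} strictly pinned by $g$ as you claim. Your construction is rescued only by an accident: because the disguise collapses both values of $c$ to one display, the two actions tie exactly at $0.5$ times the weak-preference reward, and since \Cref{def:acting-naively} only requires randomizing over argmax actions, a $g$-dependent tie-break still counts as acting naively. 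That rescue works but is fragile (any asymmetric leakage of brand information would break the tie and destroy the witness pair) and is not the argument you gave. The paper's formalization avoids this by keeping states in which the nodes genuinely are GPU- and CPU-optimized, so the disguised display is on-path for truthful play and the naive human's myopic maximization strictly tracks $g$. The fix is simply to restore those states (or to explicitly argue via the tie-breaking freedom); everything else in your proof, including the dichotomy and the robustness clauses, goes through as written.
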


Intuitively, the problem in the above example is that the human has private information that she needs to communicate with her choices. (Because her choices yield different immediate rewards, naive choices fail to communicate.) As before, the need for interference or non-naive choice disappears if the human has no private information to provide. \cameraready{Since in a {\poag}, we assume that $\human$ always has at least some private information about her preferences {\Rparams}, we omit a formal result.} The following shows that the need for interference / non-naivete also disappears if $\human$ can communicate with $\robot$. To also rule out the need to interfere with observations for $\robot\rightarrow\human$ communication (discussed in \Cref{sec:interference-for-robot-to-human-communication}) we assume communication channels in both direction.

\begin{theorem}
    Let $M$ be a POAG. There exists an optimal policy pair $(\Hpolicy,\Rpolicy)$ for $M^{\human\leftrightarrow \robot}$ where $\Hpolicy$ is naive and assumes honesty while $\Rpolicy$ does not interfere at either the action or policy levels.
\end{theorem}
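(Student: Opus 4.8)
The plan is to start from an arbitrary optimal policy pair $(\Hpolicy,\Rpolicy)$ for $M^{\human\leftrightarrow\robot}$ and transform it, value-preservingly, into one with the desired structure. The guiding principle is that with communication channels in both directions, every piece of private information that either player could otherwise transmit only through a \emph{consequential} action (the robot by garbling {\human}'s observation, {\human} by choosing a reward-suboptimal world action) can instead be routed through the explicit message channel, and crucially with no worse timing. So both obstructions identified earlier in the paper dissolve simultaneously.

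First I would make all communication honest and complete, using a revelation-principle argument that is valid precisely because the game is common-payoff. Given the original pair, I define a new pair in which {\human} truthfully transmits $(\Rparams, \Hobservation_{0:t})$ and {\robot} truthfully transmits $\Robservation_{0:t}$ over their respective channels (possible since the channels inject the full observation spaces), while each player internally reconstructs whatever coded message the original protocol would have sent and otherwise behaves exactly as before. Because messages affect neither transitions nor rewards and interests are perfectly aligned, this preserves the value and yields an optimal pair with full honest disclosure. Second, I would remove {\robot}'s observation interference by reusing the garbling-simulation technique behind \Cref{thm:r-no-private-info-implies-no-need-for-interference}, \Cref{thm:r-to-h-comm-channel-implies-no-need-for-interference}, and \Cref{thm:no-policy-observation-interference}: whenever {\Rpolicy} would take an interfering action $\Ractionprime$, replace it by the more-informative $\Raction$ guaranteed by \Cref{def:observation-interference} (same effect on state and reward) and have {\human} apply the stochastic map $F$ of \Cref{def:more-informative} to recover the observation she would have seen, acting thereafter as before. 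Any guidance {\robot} previously conveyed by interfering is now carried by its honest $\robot\to\human$ messages, so nothing is lost; the policy-level version of the argument (handling the infinite policy space, as in \Cref{thm:no-policy-observation-interference}) removes interference at the policy level too.

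The genuinely new step is making {\human} \emph{act} naively. The key observation is a timing symmetry: the observation kernel produces {\robot}'s observation of {\human}'s action only after the joint action is taken, so a consequential {\human} action at time $t$ can influence {\robot} no earlier than time $t+1$; but an honest message sent at time $t$ reaches {\robot} at exactly time $t+1$ as well, and $\Rparams$ can be disclosed at $t=0$. Hence any information {\human} could convey by sacrificing immediate reward at a state-irrelevant choice she can convey, no less promptly, through the $\human\to\robot$ channel. Since she has already disclosed everything honestly, {\robot}'s continuation no longer depends on her encoding information in state-irrelevant choices, so she may instead myopically maximize immediate reward given her calibrated belief about {\robot}'s action — that is, act naively per \Cref{def:acting-naively} — without loss of value. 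Finally, because the constructed {\robot} is honest and {\human}'s belief is calibrated (\Cref{prp:policycalibrated} and its prior-based extension), {\human}'s naive, honesty-assuming policy is a best response to {\Rpolicy}; all transformations preserved the optimal value, so the resulting pair is optimal for $M^{\human\leftrightarrow\robot}$ and free of interference at both levels.

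I expect the main obstacle to be rigorously reconciling the second and third steps at once: once {\robot} stops interfering, {\human} observes a strictly more informative signal, so I must verify that her now-naive, better-informed best response leaves {\robot}'s value unchanged, and that the policy-level de-interference argument — with its infinite-policy-space complication — stays compatible with {\human}'s naivete and honesty assumption. The timing-symmetry claim also needs care in the fully general observation kernel, where {\robot}'s observation of {\human}'s action may itself be noisy or absent, so that I must confirm the message channel really dominates the implicit action-observation channel in every case.
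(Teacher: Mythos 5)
Your first two steps are sound and coincide with the machinery the paper actually uses for the sibling results: the paper proves \Cref{thm:r-to-h-comm-channel-implies-no-need-for-interference} exactly by having $\robot$ transmit its full observation each step and replacing interfering actions with the dominating non-interfering ones, with $\human$ applying the garbling $F$ internally and \Cref{lemma:stateful-policies-stateless-policies} absorbing the memory that a stochastic $F$ requires; \Cref{thm:no-policy-observation-interference} supplies the policy-level half via the entropy-minimization selection among optimal policies. One small repair: the channel is only guaranteed an injection from the sender's \emph{observation space} into $\messagespace$, not from histories, so $\human$ cannot send $(\Rparams, \Hobservation_{0:t})$ in a single message; transmit incrementally, one observation per step, which suffices because a message sent at time $t$ arrives at $t+1$, the same lag as any action effect on the other player's observation. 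For calibration, note that the paper states this particular theorem \emph{without} any proof (there is no corresponding appendix section, and ``assumes honesty'' is never formally defined), so the only possible comparison is against the proofs of its siblings, which your steps reproduce faithfully.

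The genuine gap is the naivety step---precisely the obstacle you flag in your last paragraph and do not resolve. Your routing argument assumes that whatever $\robot$'s continuation extracts from a consequential choice by $\human$ is information that $\human$ \emph{possesses} and can therefore re-route through the channel. But the kernel $O(\Hobservation, \Robservation \mid s', \Haction, \Raction)$ lets $\Haction$ gate how much $\robot$ observes about the state itself: consider a static state $s \in \{A,B\}$, never observed by $\human$, where $\Haction = 1$ yields immediate reward $0$ and causes $\robot$ to observe $s$, while $\Haction = 2$ yields reward $\epsilon$ and causes $\robot$ to observe null, and $\robot$ later earns $1$ for guessing $s$ correctly. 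For small $\epsilon$ and $\gamma$ near $1$, every optimal pair in $M^{\human \leftrightarrow \robot}$ plays $\Haction = 1$, which is not myopically maximal, so no optimal pair acts naively in the sense of \Cref{def:acting-naively}---and no honest message can substitute, because $\human$ never learns $s$. Here $\human$'s action is an experiment performed for $\robot$'s benefit, not an encoding of $\human$'s private information, so the message channel does not dominate the implicit action-to-observation channel, and your claim that ``$\robot$'s continuation no longer depends on her encoding information in state-irrelevant choices'' fails. Completing the proof (and, arguably, making the theorem literally true) requires an additional hypothesis or argument excluding such action-gated flows of state information to $\robot$---for instance, that $\Haction$ influences $\Robservation$ only through information $\human$ already has---after which your myopic-replacement step would preserve optimality as intended.
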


\section{Human Irrationality is an Incentive for Observation Interference}
\label{sec:interference-due-to-irrationality}

Finally, we consider a third reason for observation interference: human irrationality or bounded rationality. Roughly, reducing the amount of information supplied to the human may simplify the human's decision problem and thus improve her decision making. Importantly, this motivation for observation interference may exist even if neither $\human$ nor $\robot$ has any private information.

As our model of human decision making, we adopt Boltzmann rationality \cite{Luce1959,McFadden1973}, which has recently been used in (C)IRL \cite{laidlaw2021boltzmann,ramachandran2007bayesian,ziebart2008maximum}. %
\cameraready{We define Boltzmann rationality as follows:}
\begin{restatable}{definition}{boltzmannrationality}\label{def:boltzmannrationality}
    Let $M$ be a POAG. Let $\Rpolicy$ be {\robot}'s policy in $M$. We say that {\human}'s policy $\Hpolicy$ is a \emph{Boltzmann-rational response to $\Rpolicy$} if there exists some $\beta>0$ s.t.\ for every human observation history $h$ that arises with positive probability in $M$ under $(\Rpolicy,\Hpolicy)$ we have that $\Hpolicy(a\mid h) \propto \exp \left( \beta \mathbb{E}\left[\sum_{t'=t}^\infty \gamma^{t'} R(S_t, A_t^\robot, A_t^\human) \mid \Hpolicy, \Rpolicy, h\right] \right)$.
\end{restatable}

\cameraready{Mathematically speaking, a Boltzmann-rational agent at each time step computes the expected utilities of each of the available actions and then randomizes according to the softmax of the expected utilities.}

\cameraready{The central feature of the Boltzmann rationality model is that agents are more likely to get decisions right if the differences in expected utility of the options are large. It's easy to see that if the human observes naively (and thus doesn't have calibrated beliefs), $\robot$ sometimes prefers observation interference. Roughly, $\robot$ wants to make $\human$ always believe that the difference in utilities between her actions is high.}

It turns out that even if the Boltzmann-rational human has calibrated beliefs, $\robot$'s optimal policy %
sometimes interferes with observations, even if neither $\robot$ nor $\human$ has private information. %
Intuitively, providing more information may sometimes result in less clear-cut decisions, i.e., decision situations with a smaller difference between the correct and incorrect option. To illustrate this phenomenon, consider the following example.

\begin{restatable}{example}{boltzmannirrationalex}
\label{ex:boltzmannirrationalv1}
{\human} is running a terminal command and is unsure whether to run the command with flag 1 or flag 2. With equal probability, either flag 1 or flag 2 is better, and how good the flags are differs by either a little or a lot. \rebuttal{The worse flag always yields a utility of 0, while the better flag either yields a utility of 1 or a utility 7.} Thus, {\human} is uniformly at random in one of four states. {\robot} has two actions: \texttt{man} and \texttt{tldr}. The \texttt{man} page is a long document that tells the human exactly what the values of the flags are (i.e., the exact state\rebuttal{: which flag is better and whether its utility is 1 or 7}). The \texttt{tldr} page is a short summary that tells the human which flag is better, but not by how much (i.e., ruling out half the states, leaving half remaining). \rebuttal{Thus, the expected utility of the better flag is 4 (and of the worse flag is 0).}
\end{restatable}

Intuitively, both the $\texttt{tldr}$ and $\texttt{man}$ pages allow the human to choose optimally, but the $\texttt{man}$ page is more complicated and therefore %
more likely to be misinterpreted. 
Choosing specific utilities, the effect of interference under Boltzmann rationality is as follows.
If $\robot$ interferes (i.e., provides the $\texttt{tldr}$ page), then $\human$ always chooses between a utility of $4$ and $0$.
If $\robot$ does not interfere, then half the time, $\human$ chooses between utilities 1 and 0, and half the time $\human$ chooses between utilities 7 and 0.
It turns out that for $\beta=1$, $\human$ achieves higher utility in expectation under the condition where {\robot} interferes. Building on this idea, we can prove the following (with details in \Cref{appendix:proof-of-prop:boltzmann-rational-signal-interference}).

\begin{restatable}{proposition}{boltzmannrationalsignalinterferenceprop}
\label{prop:boltzmann-rational-signal-interference}
For every $\beta>0$, $\exists$ a POAG in which neither $\human$ nor $\robot$ has private information s.t.\ all $\beta$-Boltzmann-rational/optimal policy pairs $(\Hpolicy,\Rpolicy)$ have $\Rpolicy$ interfere with observations at both the action and policy levels.
\end{restatable}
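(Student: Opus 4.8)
The plan is to turn \Cref{ex:boltzmannirrationalv1} into a $\beta$-parameterized family and tune the payoffs so that interference strictly dominates for every $\beta>0$. I would take four equiprobable states $s=(w,m)$, where $w\in\{1,2\}$ records which flag is better and $m\in\{a,b\}$ records its magnitude (with $a\ne b$ to be fixed later), a singleton reward-parameter space $\Rspace$ so that $R$ is common knowledge and the only uncertainty concerns $s$, a reward of $m$ for choosing the better flag and $0$ otherwise, a first {\robot} action choosing between \texttt{man} (after which {\human} observes $s$ exactly) and \texttt{tldr} (after which {\human} observes only $w$), and a final binary {\human} flag choice. Because $\Rspace$ is a singleton and both players receive the same observation, neither player has private information. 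Since \texttt{tldr} is a deterministic coarsening of \texttt{man} that leaves state transitions and immediate rewards unchanged, it is observation-interfering at the action level; and since this coarsening discards the magnitude in a way that cannot be undone even given {\robot}'s deterministic policy, it also interferes at the policy level (\Cref{def:policy-focused-interference}).

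Next I would evaluate {\robot}'s two options against a $\beta$-Boltzmann-rational {\human}. Writing $g_\beta(u)=u\,\sigma(\beta u)$ with $\sigma(x)=1/(1+e^{-x})$ for the expected return of a softmax choice between a known payoff $u>0$ and $0$, the non-interfering action \texttt{man} yields $\tfrac12\big(g_\beta(a)+g_\beta(b)\big)$, since {\human} sees the exact magnitude in each state. The interfering action \texttt{tldr} yields $g_\beta\big(\tfrac{a+b}{2}\big)$: knowing only $w$, {\human}'s posterior mean for the better flag is $\tfrac{a+b}{2}$, which governs both the softmax weights and---because {\human}'s choice is then independent of $m$---the realized expected payoff. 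Hence interference strictly helps exactly when
\[
 g_\beta\!\Big(\tfrac{a+b}{2}\Big) \;>\; \tfrac12\big(g_\beta(a)+g_\beta(b)\big),
\]
which by Jensen's inequality holds whenever $g_\beta$ is strictly concave on $[a,b]$.

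It then remains to produce, for each $\beta$, an interval of strict concavity of $g_\beta$. The clean route is the scaling identity $g_\beta(u)=\tfrac1\beta\,g_1(\beta u)$, where $g_1$ is the $\beta=1$ function: if $g_1$ is strictly concave on $[c_1,c_2]$, then $g_\beta$ is strictly concave on $[c_1/\beta,c_2/\beta]$, so taking $a=c_1/\beta$ and $b=c_2/\beta$ works uniformly in $\beta$. A direct computation gives $g_1''(u)=\sigma'(u)\big(2-u\tanh(u/2)\big)$, and since $\sigma'(u)>0$ while $u\tanh(u/2)$ exceeds $2$ for all $u\ge 3$, one may take $[c_1,c_2]=[3,5]$. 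For these payoffs interference strictly improves {\human}'s expected return, so {\robot}'s unique best response to a $\beta$-Boltzmann-rational {\human} is \texttt{tldr} (any mixing with \texttt{man} is strictly worse); therefore every $\beta$-Boltzmann-rational/optimal policy pair has {\robot} interfering at both levels.

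I expect the crux to be the uniformity over $\beta$ rather than the game construction. The scaling identity is what makes this manageable: it collapses the whole family $\{g_\beta\}_\beta$ to the single function $g_1$, so one concavity check---tracking the sign of $2-u\tanh(u/2)$---simultaneously settles every $\beta$, with the magnitudes $a,b$ merely rescaled by $1/\beta$. The one point requiring care is confirming that the posterior-mean argument genuinely yields $g_\beta(\tfrac{a+b}{2})$ for the realized (not merely believed) return under \texttt{tldr}, which follows because {\human}'s action is independent of $m$ once $m$ is unobserved.
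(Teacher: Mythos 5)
Your proof is correct, and at the level of the construction it coincides with the paper's own: the same four-state \texttt{man}/\texttt{tldr} game from \Cref{ex:boltzmannirrationalv1}, the same reduction of the quantifier over $\beta$ via the equivalence between rescaling $\beta$ and rescaling rewards (your identity $g_\beta(u)=\tfrac{1}{\beta}g_1(\beta u)$ is exactly the paper's ``WLOG $\beta=1$'' step), and the same verification that \texttt{tldr} interferes at the action and policy levels since the robot's move is unconditional and the coarsening $s\mapsto w$ cannot be inverted. Where you genuinely depart is the decisive inequality. The paper fixes payoffs $1$ and $7$ and checks numerically at $\beta=1$ that $4\sigma(4)\approx 3.928$ exceeds $\tfrac12\bigl(7\sigma(7)+\sigma(1)\bigr)\approx 3.862$; you instead isolate the mechanism structurally: with $g_\beta(u)=u\,\sigma(\beta u)$, interference beats revelation exactly when $g_\beta$ at the posterior mean exceeds the mean of $g_\beta$, i.e., when Jensen's inequality applies, and you exhibit a uniform strict-concavity window via $g_1''(u)=\sigma'(u)\bigl(2-u\tanh(u/2)\bigr)<0$ for $u\ge 3$, taking $(a,b)=(3/\beta,5/\beta)$. (Your calculus checks out: $g_1''=2\sigma'+u\sigma''=\sigma'(2+u(1-2\sigma))$ and $1-2\sigma(u)=-\tanh(u/2)$; at $u=3$, $u\tanh(u/2)\approx 2.72>2$ and the product is increasing.) This buys two things over the paper's route. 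First, it eliminates the numerical evaluation: the paper's interval $[1,7]$ straddles the convexity region of $g_1$ (note $g_1''(0)=1/2>0$), which is precisely why its check cannot be a direct Jensen argument and must be computed. Second, it explains the paper's own appendix observation that with \emph{fixed} payoffs $(1,7)$ interference is preferred only for $\beta$ above roughly $0.774$, and is consistent with the appendix proposition that information weakly helps as $\beta\to 0^+$: your $1/\beta$-scaled payoffs keep $[\beta a,\beta b]$ inside the concave region uniformly in $\beta$, which is what the fixed-payoff example fails to do at small $\beta$. Finally, the one subtlety you flagged---that the realized (not merely believed) return under \texttt{tldr} is $g_\beta\bigl(\tfrac{a+b}{2}\bigr)$ because the human's choice is independent of the magnitude $m$ once $m$ is unobserved---is handled correctly, and your linearity-in-mixing remark properly upgrades the pure-action comparison to ``all optimal pairs interfere.''
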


\cameraready{One might expect that the need for interference is greater at smaller values of $\beta$ and disappears at larger values of $\beta$. After all, we know by \Cref{thm:r-no-private-info-implies-no-need-for-interference} that $\robot$ has no need for observation interference when $\human$ is \textit{perfectly} rational. However, it turns out that this is \textit{not} the case! For instance, in \Cref{ex:boltzmannirrationalv1} (with the above numbers), {\robot} prefers interference if (and only if) $\beta$ is \textit
{above} $\approx 0.77361$.
Roughly speaking, the reason is that at low values of $\beta$, the observation's effect is dominated by getting the $s_a$/$s_c$ case right more often. At high values of $\beta$, the observation's effect is dominated by getting the $s_b$/$s_d$ case right \textit{less} often.} %

\section{Experiments}
\label{sec:experiments}

\camerareadyif{In the previous sections, we explored why AI assistants might take observation-interfering actions. \Cref{sec:private-information} showed that sometimes they interfere with observations at the action level in order to communicate other, more important information at the policy level. \Cref{sec:interference-due-to-irrationality} showed that sometimes they interfere with observations to make decisions easier for humans. Now,}{Motivated by the theory in \Cref{sec:private-information} and \Cref{sec:interference-due-to-irrationality},} we develop a model game and run experiments to analyze:
\begin{enumerate}[leftmargin=0.5cm]
    \item How does the amount of {\human}'s irrationality affect {\robot}'s incentive to take observation-interfering actions?
    \item How does the amount of {\robot}'s private information affect {\robot}'s incentive to take observation-interfering actions?
\end{enumerate}

\subsection{Experiment Details}

\begin{figure*}
    \centering
    \begin{subfigure}[b]{0.495\textwidth}
        \centering
        \includegraphics[width=\textwidth]{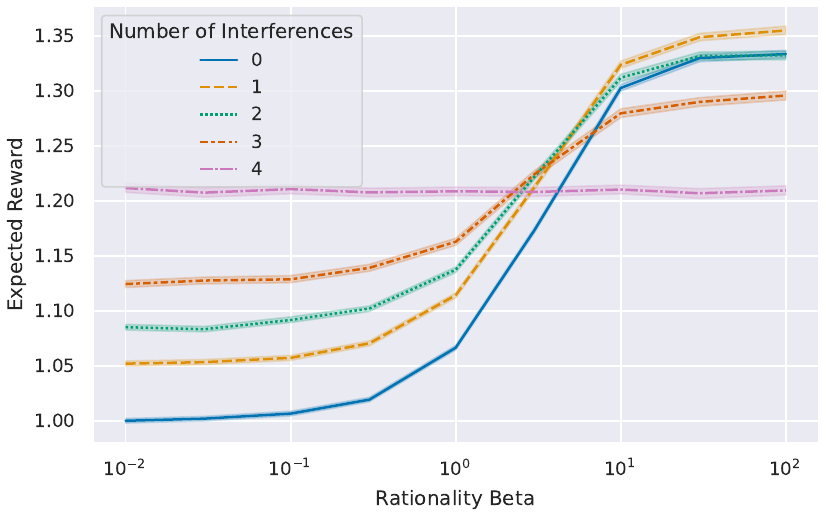}
        \caption{{\robot}'s Number of Private Observations = 2}
        \label{fig:reward-vs-rationality}
    \end{subfigure}
    \hfill
    \begin{subfigure}[b]{0.495\textwidth}
        \centering
        \includegraphics[width=\textwidth]{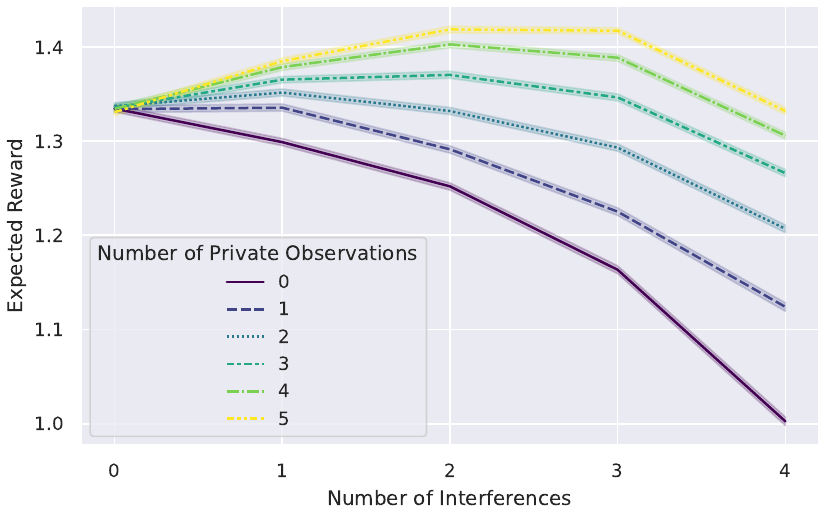}
        \caption{{\human}'s Rationality Coefficient $\beta = \infty$}
        \label{fig:reward-vs-interferences}
    \end{subfigure}%
    \caption{Incentives to interfere with observations in the product selection game. (Left) When {\human} is highly irrational, it's best for {\robot} to interfere, effectively making the choice for {\human}. As {\human} becomes more rational, there is an increasing cost to interference, and there's a tradeoff: {\robot} should interfere to communicate some information, but not destroy too much information by excessive interference. (Right) In line with \Cref{thm:r-no-private-info-implies-no-need-for-interference}, {\robot} has no incentive to interfere when {\robot} has no private observations. With more private observations, {\robot} has more incentive to interfere.}
    \label{fig:product-selection-results}
\end{figure*}

We study a game where selecting the best action requires combining private observations known only to {\human} and private observations known only to {\robot}. The game presents {\robot} with a tradeoff: {\robot} can interfere with observations to communicate information that only {\robot} observes, but interfering also destroys information that only {\human} observes.

Concretely, the game has $d$ products. Each product $i$ has two attributes, $H_i$ and $R_i$, drawn i.i.d.\ from $\text{Unif}(0, 1)$. Each product's utility is the sum of its attributes, $U_i = H_i + R_i$. The game consists of two moves. First, {\robot} sees $R_i$ for $i = 1, \ldots, k$ where $k$ is the number of ${\robot}$'s private observations. {\robot} chooses a set of products to interfere with. For the products {\robot} interfered with, {\human} sees $\hat{H}_i = -\infty$; for the remaining products, {\human} sees $\hat{H}_i = H_i$. Second, {\human} chooses a product $a_i$. Both {\human} and {\robot} receive a common payoff of the chosen product's utility, $U_i$.

We assume the human's product selection policy is Boltzmann rational over their observed values $\hat{H}_i$:

\begin{restatable}{definition}{Hstraightforwardselectiondef}
\label{def:Hstraightforwardselection}
{\human}'s \emph{Boltzmann selection policy} chooses products by a Boltzmann distribution over $\hat{H}_i$, the observed product values: $\Hpolicy(a_i) \propto \exp ( \beta \hat{H}_i )$. The parameter $\beta$ controls {\human}'s rationality.
\end{restatable}

We consider {\robot} policies that always interfere with $k$ observations for some fixed $k$. Call these policies $k$-interference.
We study the optimal such policies, characterized by the following result:

\begin{restatable}{proposition}{Rinterferingbestresponseprp}
\label{prp:Rinterferingbestresponse}
Consider {\robot} policies that always interfere with $k$ observations for some fixed $k$. Among the $k$-interference policies for a given $k$, {\robot}'s best response to {\human}'s straightforward product selection policy is as follows. {\robot} interferes with the $k$ smallest $\hat{R}_i$ values where $\hat{R}_i = R_i$ if \ {\robot} observes $R_i$, and $\hat{R}_i = 0.5$ otherwise.
\end{restatable}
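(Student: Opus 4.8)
The plan is to reduce the question ``which size-$k$ set should {\robot} suppress?'' to maximizing a single additive objective in the $\hat R_j$, by exploiting that {\human}'s choice depends only on the $H$-attributes whereas {\robot}'s interference decision depends only on its $R$-observations. First I would record the effect of interference on the human's choice: if {\robot} interferes with product $i$, then {\human} sees $\hat H_i = -\infty$, so its Boltzmann weight $\exp(\beta \hat H_i)=0$ and {\human} never selects $i$. Hence interfering with a set $S$ with $|S|=k$ is equivalent to restricting {\human}'s Boltzmann choice to the surviving set $\bar S = \{1,\dots,d\}\setminus S$: conditional on the attributes, {\human} picks $j\in\bar S$ with probability $p_j = \exp(\beta H_j)/\sum_{\ell\in\bar S}\exp(\beta H_\ell)$.

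Next I would fix a realization of {\robot}'s observations and write the conditional expected payoff of suppressing $S$. Since the payoff is $U_a = H_a + R_a$, I split it into an $H$-term and an $R$-term,
\[
\mathbb{E}\Big[\sum_{j\in\bar S} p_j\, U_j\Big]
= \mathbb{E}\Big[\sum_{j\in\bar S} p_j\, H_j\Big] + \mathbb{E}\Big[\sum_{j\in\bar S} p_j\, R_j\Big],
\]
both expectations conditioned on {\robot}'s observations. The two structural facts I would use are: (a) the $H_i$ and $R_i$ are independent; and (b) {\robot}'s choice of $S$ is a function of the observed $R$-values only, hence is independent of every $H_i$, so that conditional on {\robot}'s observations and on $S$ the surviving attributes $\{H_j\}_{j\in\bar S}$ remain i.i.d.\ $\text{Unif}(0,1)$.

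I would then evaluate the two terms. For the $R$-term, independence of $H$ and $R$ gives $\mathbb{E}[p_j R_j \mid \text{obs}] = \mathbb{E}[p_j]\,\hat R_j$, where $\hat R_j = R_j$ if {\robot} observes $R_j$ and $\hat R_j = 0.5$ (the prior mean of $\text{Unif}(0,1)$) otherwise. By exchangeability of the i.i.d.\ family $\{H_j\}_{j\in\bar S}$, the quantity $\mathbb{E}[p_j]$ is identical for every $j\in\bar S$, and since $\sum_{j\in\bar S} p_j = 1$ it equals $1/(d-k)$; thus the $R$-term is $\tfrac{1}{d-k}\sum_{j\in\bar S}\hat R_j$. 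For the $H$-term, the same exchangeability shows $\mathbb{E}[\sum_{j\in\bar S} p_j H_j]$ depends only on the cardinality $|\bar S| = d-k$ and on $\beta$, hence is constant across all size-$k$ interference sets. Maximizing the conditional expected payoff over size-$k$ sets therefore reduces to maximizing $\sum_{j\in\bar S}\hat R_j$, i.e.\ keeping the $d-k$ products with the largest $\hat R_j$, equivalently interfering with the $k$ products with the smallest $\hat R_j$. Since this holds for every realization of {\robot}'s observations, the pointwise-optimal rule is a best response.

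The main obstacle is establishing fact (b) cleanly: I must argue that because {\robot} conditions only on $R$-observations, which are independent of every $H_i$, the surviving $H_j$ stay i.i.d.\ after conditioning on $S$, so that both the symmetry identity $\mathbb{E}[p_j]=1/(d-k)$ and the set-independence of the $H$-term are valid. Once this independence is in place, the $H$-/$R$-split and the exchangeability computation are routine. I would also dispatch the minor edge cases: the $-\infty$ convention requires at least one surviving product ($k<d$), and ties among the $\hat R_j$ (in particular all unobserved products share $\hat R_j = 0.5$) are harmless, since any consistent tie-break attains the same optimal value.
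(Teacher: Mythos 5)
Your proof is correct and takes essentially the same route as the paper's: reduce interference to choosing the surviving set $\bar S$ of $d-k$ products, exploit that {\robot} neither sees nor affects the $H_j$ so the human's selection is symmetric over $\bar S$, and thereby reduce the objective to maximizing $\sum_{j\in\bar S}\hat R_j$, i.e.\ suppressing the $k$ smallest $\hat R_j$. If anything, your explicit decomposition into an $H$-term (constant across all size-$k$ sets by exchangeability) and an $R$-term (where $\mathbb{E}[p_j R_j \mid \text{obs}] = \mathbb{E}[p_j]\,\hat R_j = \hat R_j/(d-k)$ by independence) is more careful than the paper's statement that the expected payoff is ``the uniform average of the products' expected $U_i$,'' which glosses over the correlation between the human's Boltzmann choice and the surviving $H_j$ --- a harmless imprecision, since that correlation contributes the same constant for every surviving set, exactly as your argument makes explicit.
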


We consider a game with $d = 5$ products. We vary $R$'s number of interferences $k \in \{0, 1, 2, 3, 4\}$. We run a Monte Carlo simulation with $30{,}000$ trials to calculate the expected payoff in each setting. \cameraready{We run our experiments with a CPU runtime on Google Colab.}

\subsection{Varying {\human}'s Rationality}

How does {\human}'s rationality impact {\robot}'s incentive for observation interference? We fix {\robot} to have 2 private observations. We do a logarithmic sweep over {\human}'s rationality coefficient $\beta \in \{0.01, 0.03, 0.1, 0.3, 1, 3, 10, 30, 100\}$. \figref{fig:reward-vs-rationality} shows how the expected reward changes w.r.t. $\beta$.

When {\human} is highly irrational at $\beta = 0.01$, {\robot} should interfere with as many sensors as possible, effectively choosing {\human}'s action. When {\human} is acting little better than randomly, it's best for {\robot} to choose {\human}'s action, even when {\robot} has less information than {\human}. For larger values of $\beta$, a tradeoff emerges. As {\robot} has two private observations, there is an increasing benefit to interfere to communicate information to {\human}. However, as {\human} can now make use of their own private observations, {\robot} must be careful not to destroy too much of {\human}'s private information by excessive interference.

\subsection{Varying {\robot}'s Private Information}

How does the amount of private information available to {\robot} influence {\robot}'s incentive for observation interference? In \Cref{thm:r-no-private-info-implies-no-need-for-interference}, we showed conditions under which private observations for {\robot} are a necessary condition for observation interference to occur. Now, we analyze the \emph{degree} to which private observations incentivize observation interference. Based on \Cref{thm:r-no-private-info-implies-no-need-for-interference}, we hypothesize that there are circumstances where \emph{more} private information leads to \emph{more} observation interference.

We vary $R$'s number of private observations in $\{0, 1, 2, 3, 4, 5\}$. We consider {\robot}'s $k$-interference policies and analyze how the relative performance of different levels of observation interference $k$ change with the number of private observations available to {\robot}.

\figref{fig:reward-vs-interferences} shows how the expected reward changes depending on $k$, the number of interferences. When {\robot} has no private observations, then reward decreases for each increased number of interferences. However, as the number of {\robot}'s private observations increases, the relative ordering of the observation interference policies changes; \textit{with more private observations, {\robot} has an incentive to interfere with more observations}. This confirms our hypothesis based on \Cref{thm:r-no-private-info-implies-no-need-for-interference}. Nevertheless, there is a limit to {\robot}'s observation interference incentive. Because interfering with observations destroys {\human}'s information, {\robot} must be careful not to interfere too much.

\section{Conclusion}

\paragraph{Limitations and Future Work}

\camerareadyif{We choose to study \emph{optimal solutions}, such as optimal policy pairs and best responses. This has the advantage of providing general insight into the underlying game structure that is independent of any particular learning algorithm. However, if algorithms fail to find optimal solutions, they might break down in unexpected ways not captured by our theory.

Moreover, w}{O}ptimal policy pairs sometimes require {\human} and {\robot} to have a shared communication protocol (e.g., \Cref{ex:preferencequery2}). It would be interesting to study additional solution concepts, such as correlated equilibria and communication equilibria, to handle this sort of communication \citep{forges1986approach}. While we consider only a single human and single assistant, it would also be interesting to study scenarios with multiple humans and multiple assistants. \rebuttal{As we focus on the Boltzmann rationality model of human decision making (\Cref{def:boltzmannrationality}), future work could consider other human models and empirical validation with human subjects.} Lastly, while we run experiments in one model of a {\poag}, it would be interesting to see if and how our experimental trends generalize to other {\poag}s\rebuttal{, including {\poag}s where {\robot} must query {\human}'s preferences}.

\section*{Impact Statement}
\rebuttal{AI assistants are being developed and deployed in settings where humans can only partially observe what's happening. For example, AI assistants including ChatGPT, Claude, and Gemini can search the web while only returning summaries to users \citep{openai2024chatgptsearch,anthropic2025claudesearch,google2024geminideepresearch}. Moreover, the sorts of AI models powering these assistants are processing increasingly long inputs. Whereas the original ChatGPT model could only process 4096 input tokens, today's Gemini 1.5 Pro can process 2,000,000 input tokens—which is roughly 100,000 lines of code, or 16 novels of average length in English \citep{google2025geminilongcontext}. In the future, we anticipate that AI assistants will be deployed at increasing scale, independently taking more actions on behalf of users and processing increasingly long context lengths. We thus expect that over time, humans will have less and less ability to directly observe everything that's happening.

Even when the AI assistant and the human have perfect value alignment, we show how observation interference can emerge from several distinct incentives. As we focus on optimal assistants---analyzing optimal policy pairs and best responses---all of the incentives for observation interference that we consider are done for the human's benefit. This creates a nuanced picture, suggesting that not all observation interference is inherently bad. As AI assistants might exhibit observation interference for a mix of good and bad reasons, it would be interesting for future work to explore how to handle this nuanced situation. For example, future AI systems could be designed with transparency about when interference occurs and user controls to override interference when desired.

With this theory, our goal is to understand the causes of observation interference and help disentangle them in practice.} We intend for our work to help AI developers build assistants that their users can trust. Our work is primarily theoretical, and we are not aware of any ways it could be used to cause harm.

\rebuttal{\paragraph{Computational Complexity} Given that finding optimal policies in POAGs is NEXP-hard, how might our results apply in a given environment? Most of our paper is descriptive, characterizing when observation tampering could happen. Complexity considerations could affect these results in either direction. It's easy to construct environments where finding good observation-interfering policies is computationally intractable but constructing good non-interfering policies is easy; and vice versa. In practice, complexities of the environment can be orthogonal to incentives to interfere. For instance, a real-world version of the CUDA example is complex (A assesses complicated software compatibility issues), but the decision whether to interfere with observations is easy. We believe our characterizations remain useful even in complex environments (where we can't expect optimal policies), although we can't make as definitive claims as we can about optimal policies.

We have discussed allowing communication between H and A. A complexity-theoretic argument favors this solution: If H and A share all private information, the game effectively turns from a DecPOMDP into a POMDP. Solving POMDPs is PSPACE-complete and thus likely easier than solving DecPOMDPs.}

\section*{Author Contributions}
The project was conceived by S.E., who developed the initial theorems and examples. C.O. led the theoretical development, proving the main results and formalizing the examples. S.E. conducted the experimental analysis. The manuscript was written collaboratively, with S.E. leading the introduction, experimental, and conclusion sections, while the theoretical sections were co-written by S.E. and C.O. V.C. and S.R. advised throughout the project.

\section*{Acknowledgments}
The authors thank Mark Bedaywi, Emery Cooper, Anca Dragan, Andrew Garber, Linus Luu, and Rohan Subramani for helpful discussions.
C.O.'s work is supported by an FLI AI Existential Risk Fellowship.
V.C. acknowledges financial support from the Cooperative AI Foundation, Polaris Ventures (formerly the Center for Emerging Risk Research), and Jaan Tallinn's donor-advised fund at Founders Pledge.
S.E. and S.R. are grateful for Open Philanthropy's gift to the Center for Human-Compatible AI.

\bibliography{references}
\bibliographystyle{icml2025}
\newpage
\appendix
\onecolumn
\section{Proofs for \Cref{sec:beliefs}}
\label{app:belief-proofs}

Our techniques are similar to those of \citet{shah2020benefits} and \citet{desai2017uncertain}, who show how to form a single-agent POMDP for {\robot} by embedding {\human} into the environment dynamics. However, our construction works in the opposite direction, with {\human} embedding {\robot's} actions and observations into the environment.

\policycalibratedprp*

\begin{proof} 
We construct a single-agent POMDP $\langle \statespaceprime, \Hactionspace, \Tprime, \Rprime, \Hobservationspace, \HOprime, P_0, \gamma \rangle$ for {\human}. Standard POMDP inference lets {\human} form $P(\stateprime_t \mid \Hobservation_{1:t})$, which includes $P(\state_t \mid \Hobservation_{1:t})$.

Consider a new set of states $\stateprime_t \in \statespaceprime_t = \statespace{^{t+1}} \times (\Robservationspace)^t \times \Ractionspace$, where each new state $\stateprime_t$ corresponds to a full sequence of original states $\state_{0:t}$, full sequence of assistant observations $\Robservation_{1:t}$, and the previous assistant action $\Raction_{t-1}$. The new $\Tprime$ satisfies $\Tprime(\stateprime_{t+1} \mid \stateprime_t, \Haction_t) = \pi^\robot(\Raction_t \mid \Robservation_{1:t}) T(s_{t+1} \mid \state_t, \Haction_t, \Raction_t) \RO(\Robservation_{t+1} \mid \state_{t+1}, \Haction_t, \Raction_t)$. The new $\HOprime$ satisfies $\HOprime(\Hobservation_{t+1} \mid \stateprime_{t+1}, \Haction_t) = \HO(\Hobservation_{t+1} \mid \state_{t+1}, \Haction_t, \Raction_t)$. The new reward function $\Rprime$ can be arbitrary, as it doesn't affect inference.
\end{proof}

\policyupdatecalibratedprp*

\begin{proof}
Within each iteration of the game, {\human} does the same as for \Cref{prp:policycalibrated}. Between iterations, {\human} applies {\robot}'s update rule to get {\robot}'s policy for the next iteration.
\end{proof}

\calibrationholdswithinterferencermk*

\begin{proof}
The possibility of observation interference (\Cref{def:observation-interference}) is merely treated like any other part of the other agent's policy and the game dynamics. By definition, interference actions are just another action, and our proofs of \Cref{prp:policycalibrated} and \Cref{prp:policyupdatecalibrated} made no assumptions on the actions.
\end{proof}

\section{Proofs and Example Formalizations for \Cref{sec:private-information}}

\subsection{A Lemma about Policies with Internal States}

In our proof of \Cref{thm:r-no-private-info-implies-no-need-for-interference} (and our proof of \Cref{thm:no-policy-observation-interference}), we will construct policies that maintain an internal state (the previously sampled garbled observations). We will call this a \textit{virtual state}. However, our setup (in line with the norm in the literature) does not allow for such policies. We here show that any policy with a virtual state can be ``simulated'' by a policy without virtual states. Since this result is about a single player's policy, holding the opponent policy fixed, we will prove this in POMDPs.

First, a \textit{virtual-state policy} is a family of distributions $\pi(a, \tilde v \mid v, h)$, where:
\begin{itemize}
    \item $h$ is a history of observations and actions as usual;
    \item $v$ is an agent state from some discrete set (e.g., $\mathbb{N}$ or $\Omega \times \mathcal{A}$);
    \item $\tilde v$ is another (new) virtual state;
    \item $a$ is an action.
\end{itemize}
Additionally we specify an initial virtual state $v_0$.
Virtual-state policies give rise to histories in the obvious way: the initial agent state is $v_0$; the agent then samples an action $a_0$ and a following virtual state $v_1$ from $\pi(\cdot \mid v_0)$. In the next step it samples an action and agent state from $\pi(\cdot \mid o_0a_1, v_1)$ and so on.

We now show that policies with a virtual state can be transformed into behaviorally equivalent policies without an agent state.

\begin{lemma}\label{lemma:stateful-policies-stateless-policies}
    Let $\pi$ be a virtual-state policy. Then there exists a regular policy $\bar \pi$ s.t.\ the resulting distribution over (environment state, observation, action) histories is the same under $\pi$ and $\bar \pi$. In particular, the expected rewards of the two are the same.
\end{lemma}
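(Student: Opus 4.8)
The plan is to define $\bar\pi$ by marginalizing out the virtual state: at each history $h$, let $\bar\pi(a \mid h)$ be the probability that the virtual-state policy $\pi$ emits action $a$ given that the realized history is $h$, computed under $\pi$'s own joint dynamics over environment states, observations, virtual states, and actions. Concretely, I would maintain a \emph{belief} $b_t(v) := \Pr_\pi(v_t = v \mid h_t)$ over the current virtual state given the observable history, and set
\[
\bar\pi(a \mid h_t) \;=\; \sum_v b_t(v)\, \pi(a \mid v, h_t),
\]
where $\pi(a\mid v,h_t) = \sum_{\tilde v}\pi(a,\tilde v\mid v, h_t)$. On histories of probability zero, $\bar\pi$ may be defined arbitrarily. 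The point of the construction is that $b_t$ is a function of $h_t$ alone, so that $\bar\pi$ is a genuine history-dependent policy carrying no virtual state.

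First I would establish the filtering recursion that makes $b_t$ computable from $h_t$. Starting from $b_0 = \delta_{v_0}$, after $a_t$ and $o_{t+1}$ are appended to form $h_{t+1}$, I would show
\[
b_{t+1}(\tilde v) \;\propto\; \sum_v b_t(v)\, \pi(a_t, \tilde v \mid v, h_t),
\]
normalized over $\tilde v$. The content here is a conditional-independence claim: given $h_t$ and the action $a_t$, the next virtual state $v_{t+1}$ is independent of the new observation $o_{t+1}$ (and of the environment state), because the virtual transition depends only on $(v_t, h_t, a_t)$ while $o_{t+1}$ is generated by the environment kernels from $a_t$ and the environment state. Hence conditioning on $o_{t+1}$ contributes nothing beyond conditioning on $a_t$, which is exactly what the displayed recursion encodes.

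The main argument is then an induction on $t$ showing that $\pi$ and $\bar\pi$ induce the same joint law over $(s_{0:t}, o_{0:t}, a_{0:t})$, with the accompanying invariant $b_t(\cdot) = \Pr_\pi(v_t = \cdot \mid h_t)$. The key structural fact is that, for \emph{any} policy, the probability of a history factors as a product of environment factors (the initial distribution $P_0$, the transition kernel $T$, and the observation kernel) and the action-given-history conditionals; the environment factors are identical for $\pi$ and $\bar\pi$. In the inductive step, extending $h_t$ to $h_{t+1}$ by appending $a_t$ then $o_{t+1}$, the action probability under $\bar\pi$ is $\sum_v b_t(v)\pi(a_t\mid v,h_t)$, which by the invariant equals the marginal $\Pr_\pi(a_t\mid h_t)$, while the transition and observation factors agree by construction; the belief recursion then preserves the invariant. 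Since expected discounted reward is a functional of the shared joint law over $(s_{0:t}, a_{0:t})$, equality of expected rewards follows at once.

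I expect the main obstacle to be the conditional-independence step underlying the belief recursion: pinning down precisely that $o_{t+1}$ carries no information about $v_{t+1}$ once $a_t$ lies in the history, so that $b_t$ is truly a function of $h_t$ and can be propagated without reference to the hidden environment state. A secondary, routine point is the bookkeeping for zero-probability histories, on which the belief and policy are defined arbitrarily and contribute nothing to the induced distribution.
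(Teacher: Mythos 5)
Your proposal is correct and takes essentially the same route as the paper: the paper defines $\bar \pi (A \mid h^{o,a}) = \sum_{v_0,\dots,v_t} P(v_0,\dots,v_t \mid \pi, h^{o,a})\, \pi(A\mid v_t, h^{o,a})$ --- your belief $b_t$ is exactly the marginal of this posterior, and your filtering recursion is just a recursive rendering of it --- and the paper's induction on the joint law of state--observation--action histories rests on precisely the conditional independence you single out as the crux, namely $P(v_0,\dots,v_t\mid \pi, h^{s,o,a}) = P(v_0,\dots,v_t\mid \pi, h^{o,a})$, since every dependence between environment states and virtual states is mediated by observations and actions.
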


The result is related to Kuhn's \citep{Kuhn53} proof of the equivalence of behavioral and mixed strategies in perfect-recall extensive-form games.

\begin{proof}
    For this proof we use $h^{o,a}$ to denote observation--action histories and $h^{o,a}$ to use state--observation--action histories.
    Consider $\bar \pi$ that at time step $t$ is defined by
    \begin{equation*}
        \bar \pi (A \mid h^{o,a}) = \sum_{v_0,...,v_t} P(v_0,...,v_t \mid \pi, h^{o,a}) \pi(A\mid v_t, h^{o,a}). 
    \end{equation*}
    Intuitively, at time step $t$ we infer a probability distribution over histories of virtual states and in particular $v_t$, conditioning on the observed observation--action history $h$, and then sample from the action distribution induced by $\pi(A \mid v_t, h^{o,a})$.

    We prove that for each time step $t$, the state--observation--action history up until time step $t$ is the same between $\pi$ and $\bar\pi$.
    We prove this by natural induction. The base case is trivial. Assume that the distribution over state--observation--action histories up until time step $t$ is the same. We will show that for each state--observation--action history, the distribution over actions $a_{t+1}$ at time $t+1$ is the same under $\pi$ and $\bar\pi$. Note that the action distribution under $\pi$ is given by
    \begin{equation*}
        \sum_{v_0^A,...,s_t^A}P(v_0,...,v_t\mid \pi, h^{s,o,a}) \pi(A\mid v_t, h^{o,a}).
    \end{equation*}
    Now note that $P(v_0,...,v_t\mid \pi, h^{s,o,a}) =  P(v_0,...,v_t\mid \pi, h^{o,a})$, i.e., given the history of states and observations, the environment states don't provide further evidence about the agent states, since every dependence between environmental states and agent states is mediated by observations and actions. Thus, this distribution is the same as the distribution $\bar \pi (A \mid h ^{o,a})$.
\end{proof}

\subsection{Proof of \Cref{thm:r-no-private-info-implies-no-need-for-interference}}

\thmrnoprivateinfoimpliesnoneedforinterference*

\begin{proof}[Proof sketch]
Note first that because our setting is common-payoff and involves no absentmindedness/imperfect recall, there is always an optimal policy pair in which neither $\robot$ nor $\human$ randomizes in any observation history.
Let $(\Hpolicy,\Rpolicy)$ be any optimal policy pair for $M$. Let $\Raction_{\mathrm{interfere}}$ be an interference action played by $\Rpolicy$. Let $\bar a ^ \robot$ be the corresponding non-interference strategy. Now consider the policy $\bar \pi^\robot$ that plays like $\Rpolicy$ except that it plays $\bar a ^ \robot$ instead of $\Raction_{\mathrm{interfere}}$.

We will now construct a corresponding human policy $\bar \pi ^ \human$ that results in playing the same actions at each point as $\Raction$. Note that by the assumption that $\robot$ has no private observations and the fact that $\Rpolicy$ and $\bar\pi ^ \robot$ are deterministic, $\human$ always knows $\robot$'s full observation history. Thus, $\human$ knows in particular when for which time steps in her observation history $\Rpolicy$ would have played $\Raction_{\mathrm{interfere}}$ and $\bar\pi ^ \robot $ played $\bar a ^ \robot$ instead.

Now let $F$ be the observation translation function as per \Cref{def:more-informative}. Intuitively, we want $\bar \pi ^ \human$ to apply $F$ to any new observation that results from playing $\bar a ^ \robot $ rather than $\Raction_{\mathrm{interfere}}$, and then remember that modified observation in place of the actual observation. It would then be easy to show that $\bar \pi ^ \human$ would result in the same actions as $\Hpolicy$. Together with the fact that $\Raction_{\mathrm{interfere}}$ and $\bar a ^ \robot $ have the same effect on state transitions and rewards, we would immediately obtain that $(\bar \pi ^ \human, \bar \pi ^ \robot)$ has the same utility as $(\Hpolicy,\Rpolicy)$.

Unfortunately, if $F$ is stochastic, the above construction requires that $\human$ can remember the results of past applications of $F$. That is, if at time step $t$ she observes according to $\bar a ^ \robot$ and translates according to $F$ to obtain some new observation $\Hobservation_t$ (that she would have obtained under interference), then at any time step $t'>t$, she needs to remember that she sampled $\Hobservation_t$ from $F$. Our formalism doesn't allow for such memory. However, by \Cref{lemma:stateful-policies-stateless-policies} we can construct a policy without internal memory to imitate the policy we constructed.

\end{proof}

\subsection{Formalization of \Cref{ex:optimalinterference} and Proof of \Cref{prop:interference-for-robot-human-communication}}
\label{appendix:proof-of-prop:interference-for-robot-human-communication}

\optimalinterferenceex*

Formalization:
\begin{itemize}
    \item $\statespace=\left(\{0,1\} \times \{0,1\}^{10}\times \{0,1\}^{10}\right) \cup \{E\} \cup \{I\}$ -- $E$ is a terminal state, which we use to make the POAG effectively episodic. $I$ is an initial state. The first bit, which we denote by $s_0$, encodes the time step. The next ten bits encode which versions are available. The last ten bits encode which versions are compatible. For any state $s$, we use $s_0$ to refer to the first entry of the state.
    \item $\Hobservationspace = \{0,1\}^{10} \cup \{\text{null}\}$ -- representing the availability bits.
    \item $\Robservationspace = \{0,1\}^{10} \cup \{\text{null}\}$ -- representing which packages are compatible.
    \item $\Rspace=\{\theta\}$ is a singleton.
    \item $\Hactionspace=\{1,...,10\}$ -- representing which package to choose.
    \item $\Ractionspace=\{0, 1\}^{10}$ -- representing for what packages, availability is suppressed, where 0 indicates suppression.
    \item $\robot$'s observations are given as follows.  If $\state\notin \{E,I\}$ and $\state_{0}=0$ (i.e., it is the first time step), then $O^\robot(\Robservation | \state, \Raction, \Haction) = \mathbb{1}[\Robservation{=}\state_{11:20}]$. That is, $\robot$ observes perfectly what cuda versions are compatible. Otherwise, $O^\robot(\Robservation | \state, \Raction, \Haction) = \mathbb{1}[\Robservation{=}\text{null}]$. That is, in all other time steps, $\robot$ does not observe anything.
    \item $\human$'s observations are given as follows. If $\state\in \{E,I\}$ or $\state_0\neq 1$, then $\human$ simply observes $\text{null}$. If $\state\notin \{E,I\}$ and $\state_0=1$, then $O^\human(\Hobservation | \state, \Raction, \Haction) = \mathbb{1}[\Robservation_i{=}\state_{i+1}\Raction_i]$. That is, for each availability bit, $\human$ observes 0 if $\robot$ set the availability bit to $0$; otherwise, $\human$ simply observes the availability bit. 
    \item $R(\state,\Haction, \Raction)=0$ if $\state\in \{E,I\}$ or $\state_0=0$. Otherwise, $R(\state,\Haction,\Raction)=\state_{\Haction}\state_{\Haction+10}$. That is, a reward of $1$ is obtained if and only if the cuda version chosen by $\human$ is both available and compatible.
    \item $P_0(\state)=\mathbb{1}[\state=I]$. That is, the initial state is always I.
    \item If $\state=I$, then $T(\cdot \mid \state, \Haction, \Raction)$ is the uniform distribution over states $\state'$ in which at least one cuda version is available and compatible, i.e., $\sum_{i=1}^{10}s_is_{i+10}\geq 1$. If $\state\neq I$, then $T(\state' \mid \state, \Haction, \Raction)=1$ if
    \begin{itemize}
        \item $\state_0 = 0$, $\state'_0=1$ and $\state_{1:20}=\state'_{1:20}$; or
        \item $\state_0 = 1$ and $\state'=E$; or
        \item $\state = \state' = E$.
    \end{itemize}
    Otherwise, $T(\state' \mid \state, \Haction, \Raction)=0$.
\end{itemize}

\propinterferenceforrobothumancommunication*

\begin{proof} %
    Consider \Cref{ex:optimalinterference}.

    First consider the following policy pair:
    At the first time step, $\robot$ chooses $\Robservation\in\{0,1\}^{10}$, i.e., $\robot$ chooses to suppress the availability signal exactly for those cuda versions that aren't compatible. At all other time steps the assistant chooses uniformly at random. Call this policy $\hat{\pi}^\robot$.

    At the second time step, when the human observes $\Hobservation\in \{0,1\}^{10}$, the human chooses some $\Haction$ s.t.\ $\Hobservation_{\Haction}=1$. That is, $\human$ chooses a cuda version that her observation shows is available. It is easy to see that under the above $\robot$ policy there always exists such a $\Haction$. At all other time steps, $\human$ chooses uniformly at random. Call this policy $\hat{\pi}^\human$.

    It's easy to see that the above policy pair is optimal: By the structure of the environment, we can receive a reward of at most 1 by having the human choose a compatible and available policy at time step 1. Clearly, the above policy achieves this reward of $1$.

    Next, note that the only non-interference action for $\robot$ is $(1,1,...,1)$. Thus, the only non-interference policy for $\robot$ is to always play $(1,1,...,1)$. Call this policy $\Rpolicy_{\text{ni}}$.

    Note that the best response for $\human$ against $\Rpolicy_{\text{ni}}$ is $\hat{\pi}^\human$. Thus, $\hat{\pi}^\human$ is acting naively.
    
    Furthermore, note that $\hat{\pi}^\human$ acts naively.

    It is easy to see that adding a $\human\rightarrow\robot$ communication channel makes no difference to the above analysis.
\end{proof}

\subsection{Proof of \Cref{thm:r-to-h-comm-channel-implies-no-need-for-interference}}

\thmrtohcommchannelimpliesnoneedforinterference*

\begin{proof}[Proof sketch]
    Roughly, take any deterministic optimal policy pair $(\Hpolicy,\Rpolicy)$.
    Consider the assistant policy $\bar\Rpolicy$ that at each time step communicates $\robot$'s full observation to $\human$ and that replaces interference with non-interference actions. Because $\Rpolicy$ is deterministic, $\human$ can infer what $\Rpolicy$ would have communicated based on $\bar\Rpolicy$'s communications.
    The rest of the proof goes the same way as \Cref{thm:r-no-private-info-implies-no-need-for-interference}.
\end{proof}

\subsection{Proof of \Cref{thm:no-policy-observation-interference}}

For the proof of \Cref{thm:no-policy-observation-interference}, we'll use the concept of entropy. For any probability distribution $P$ over some discrete space, let $H(P)\coloneqq - \sum_x P(x) \log P(x)$ denote the distribution's entropy. The following is a well-known result in information theory [e.g., \citenum{Ash1965}, Theorem 1.4.5; \citenum{Cover1991}, Theorem 2.6.5].

\begin{lemma}[Conditioning decreases entropy]\label{lemma:conditioning-decreases-entropy}
Let $X,Y$ be random variables, then $\mathbb{E}_{Y} \left[ H(P(X \mid Y)) \right] \leq H(P(X))$. Further, the inequality is strict if $X$ and $Y$ are not independent, i.e., if $P(X) \neq P(X \mid y)$ for some $y$, then $\mathbb{E}_{Y} \left[ H(P(X \mid Y)) \right] < H(P(X))$.
\end{lemma}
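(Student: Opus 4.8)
The plan is to prove the statement as a direct consequence of the \emph{concavity of the entropy functional}, combined with the law of total probability. The central observation is that the marginal $P(X)$ is a convex combination of the conditional distributions, $P(X) = \sum_y P(y)\, P(X \mid y)$, with mixture weights exactly $P(y)$. Since $\mathbb{E}_Y[H(P(X\mid Y))] = \sum_y P(y)\, H(P(X\mid y))$ is the corresponding $P(y)$-weighted average of the entropies of the mixed distributions, the desired inequality $\mathbb{E}_Y[H(P(X\mid Y))] \leq H(P(X))$ is precisely Jensen's inequality applied to the concave map $H$.

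First I would establish that $H$ is concave as a function on the probability simplex. This follows from the concavity of the scalar map $t \mapsto -t\log t$ on $[0,1]$ (whose second derivative $-1/t$ is negative on $(0,1]$): writing $H(P) = \sum_x \bigl(-P(x)\log P(x)\bigr)$ exhibits $H$ as a finite sum of concave scalar maps composed with the linear coordinate projections $P \mapsto P(x)$, and both composition with a linear map and summation preserve concavity. (Equivalently, one may write $H(P) = \log|\mathcal{X}| - D(P \,\|\, U)$ for the uniform $U$ and invoke convexity of relative entropy; I would use whichever formulation reads cleanest.) Throughout I would adopt the convention $0\log 0 = 0$ and restrict the sum over $y$ to those with $P(y)>0$, so that each $P(X\mid y)$ is well defined; this is harmless since zero-probability $y$ contribute nothing to either side.

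Second, I would apply Jensen's inequality for the concave functional $H$ to the mixture $P(X) = \mathbb{E}_Y[P(X\mid Y)]$, which immediately yields the weak inequality. For the strict part, I would appeal to the \emph{strict} concavity of $t\mapsto -t\log t$, which makes Jensen strict unless the mixed distributions $\{P(X\mid y): P(y)>0\}$ are all identical. If they were all identical, each would have to equal their $P(y)$-weighted average, namely $P(X)$, giving $P(X\mid y)=P(X)$ for every $y$ with $P(y)>0$, i.e., independence of $X$ and $Y$. Contrapositively, the hypothesis that $P(X)\neq P(X\mid y)$ for some $y$ forces the conditionals not to be all equal, so Jensen is strict and $\mathbb{E}_Y[H(P(X\mid Y))] < H(P(X))$.

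The step I expect to require the most care is the equality-condition bookkeeping for strict Jensen: I must phrase it so that strictness requires the points $P(X\mid y)$ (for $P(y)>0$) entering the convex combination not to be all equal, and then observe that the paper's hypothesis indeed forces this, since equal conditionals would have to coincide with their average $P(X)$. If this equality accounting becomes cumbersome, an equivalent and perhaps cleaner route for the whole lemma is to note that $H(P(X)) - \mathbb{E}_Y[H(P(X\mid Y))] = \sum_{x,y} P(x,y)\log\frac{P(x,y)}{P(x)P(y)}$, the mutual information $I(X;Y)$, and to invoke Gibbs' inequality (nonnegativity of relative entropy, itself a consequence of Jensen applied to the convex map $-\log$), whose equality case is exactly $P(x,y)=P(x)P(y)$ for all $x,y$, i.e., independence. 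I would present this KL-based derivation as the fallback, since it ties the strictness condition to independence in one stroke.
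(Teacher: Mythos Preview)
Your proposal is correct. Note that the paper does not actually supply its own proof of this lemma: it states it as a well-known result and cites standard information-theory textbooks (Ash, Theorem~1.4.5; Cover and Thomas, Theorem~2.6.5). Your concavity-of-entropy/Jensen argument (and the equivalent mutual-information route you offer as a fallback) is precisely the standard textbook derivation found in those references, so there is nothing substantive to compare.
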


Using this result, we can provide the following variant.

\begin{lemma}\label{lemma:processing-increases-conditional-entropy}
    Let $S$ be a random variable. Let $X,Y$ be independent samples from $F(S)$ and let $Z$ be sampled from $G(Y)$, where $F$ and $G$ are stochastic functions. Then
    \begin{equation*}
        \mathbb{E}_Z \left[ H(P(S\mid Z)) \right] \geq \mathbb{E}_X \left[ H(P(S\mid X)) \right].
    \end{equation*}
    Moreover, the inequality is strict if $S$ and $Y$ are dependent given $Z$.
\end{lemma}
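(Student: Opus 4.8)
The plan is to rephrase the statement in terms of conditional entropies and reduce it to a single application of \Cref{lemma:conditioning-decreases-entropy}. Writing $H(S \mid Z) \coloneqq \mathbb{E}_Z[H(P(S\mid Z))]$ and $H(S\mid X) \coloneqq \mathbb{E}_X[H(P(S\mid X))]$, the goal is exactly $H(S\mid Z) \geq H(S\mid X)$. The two structural facts I would isolate first are: (i) because $X$ and $Y$ are both drawn from the same kernel $F(S)$, the pairs $(S,X)$ and $(S,Y)$ are identically distributed, so $H(S\mid X) = H(S\mid Y)$; and (ii) because $Z$ is generated from $Y$ alone via $G$, the variables form a Markov chain, i.e.\ $S \ci Z \mid Y$. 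Notably, the independence of $X$ and $Y$ given $S$ is not used beyond guaranteeing that each has conditional law $F(\cdot \mid S)$; once fact (i) lets us replace $X$ by $Y$, the variable $X$ disappears and everything happens between $Y$ and $Z$.

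First I would record that the Markov property $S \ci Z \mid Y$ gives $P(S \mid Y, Z) = P(S \mid Y)$, and hence, taking entropies and averaging, $H(S \mid Y, Z) = H(S \mid Y)$. Second, I would apply the conditional form of \Cref{lemma:conditioning-decreases-entropy}: fixing each value $z$ of $Z$ and applying the lemma to $S$ and $Y$ under the conditional law $P(\cdot \mid Z = z)$ gives $\mathbb{E}_{Y \mid Z=z}\left[H(P(S \mid Y, Z=z))\right] \leq H(P(S \mid Z=z))$; averaging over $Z$ then yields $H(S \mid Y, Z) \leq H(S \mid Z)$. Chaining these with (i),
\[
H(S \mid X) = H(S \mid Y) = H(S \mid Y, Z) \leq H(S \mid Z),
\]
which is the desired inequality.

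For the strictness claim, note that the chain above contains exactly one inequality, namely the application of \Cref{lemma:conditioning-decreases-entropy} in the second step. Its strict case states that the inequality is strict whenever $S$ and $Y$ fail to be independent under some positive-probability conditional law $P(\cdot \mid Z = z)$ — that is, precisely when $S$ and $Y$ are dependent given $Z$, matching the stated hypothesis.

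The main thing to get right is the bookkeeping of which (conditional) independence holds where: one uses the equality in law of $(S,X)$ and $(S,Y)$, which comes from the shared kernel $F$, to swap $X$ for $Y$, and the Markov structure $S \ci Z \mid Y$, which comes from $Z$ depending on $Y$ only, to insert and then absorb the extra conditioning on $Y$. Equivalently, the whole statement is the data-processing inequality $I(S;Z) \leq I(S;Y) = I(S;X)$ together with $H(S \mid \cdot) = H(S) - I(S;\cdot)$; I would nonetheless present the conditional-entropy version, since the paper has already set up \Cref{lemma:conditioning-decreases-entropy} as its workhorse and the strictness condition emerges directly from that lemma.
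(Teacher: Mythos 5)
Your proposal is correct and follows essentially the same route as the paper's own proof: the identical chain $H(S\mid X)=H(S\mid Y)=H(S\mid Y,Z)\leq H(S\mid Z)$, using the equal-in-law swap of $X$ for $Y$, the Markov property $S\ci Z\mid Y$, and \Cref{lemma:conditioning-decreases-entropy} (with its strict case for the strictness claim). You merely spell out the justifications for each step more explicitly than the paper does.
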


\begin{proof}
For the non-strict version:
    \begin{eqnarray*}
        H(P(S\mid X)) &= & H(P(S\mid Y))\\
        &=& H(P(S\mid Y, Z))\\
        &\underset{\text{Lemma }\ref{lemma:conditioning-decreases-entropy}}{\leq} & H(P(S\mid Z))
    \end{eqnarray*}
The strict version can be proved the same way using the strict version of \Cref{lemma:conditioning-decreases-entropy}.
\end{proof}

Next, we can use this to prove that a garbling induces a lower-entropy distribution over states.

\begin{lemma}\label{lemma:more-informative-implies-lower-entropy}
    Let $L$ be some set of states. Let $(P_a(\cdot \mid s))_{s\in L}$ and $(P_b(\cdot \mid s))_{s\in L}$ be families of probability distributions s.t.\ $P_a$ is strictly more informative than $P_b$ with transformation function $F$. Further let $S$ be some random variable over $L$ with full support. Let $X_a\sim P_a(\cdot \mid S)$ and $X_b \sim F(X_a)$. Then %
    $S$ and $X_a$ are dependent given $X_b$. In particular, from \Cref{lemma:processing-increases-conditional-entropy} we get that $\mathbb{E}_X \left[ H(P(S \mid X)) \right] < \mathbb{E}_{\hat X} \left[ H(P(S \mid \hat X)) \right]$.
\end{lemma}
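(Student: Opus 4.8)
The plan is to establish the dependence claim by contraposition and then feed it into \Cref{lemma:processing-increases-conditional-entropy}. First I would record the structural fact that $(S, X_a, X_b)$ form a Markov chain $S \to X_a \to X_b$: since $X_b = F(X_a)$ is obtained by garbling $X_a$ alone, conditioning on $X_a$ renders $X_b$ independent of $S$. Now suppose, toward a contradiction, that $S$ and $X_a$ were \emph{independent} given $X_b$, i.e.\ $P(X_a = x \mid X_b = x_b, S = s) = P(X_a = x \mid X_b = x_b)$ for every state $s$ and every $x_b$ of positive probability; the full-support assumption on $S$ guarantees this conditional is well-defined on the relevant support.

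The crux is to turn this conditional independence into a \emph{reverse} garbling contradicting strictness. I would define a stochastic function $G \colon \Omega \rightsquigarrow \Omega$ by letting $G(x_b)$ have law $P(X_a = \cdot \mid X_b = x_b)$, and then compute, for each fixed state $s$, the law of $G(X_b)$ when $X_b \sim P_b(\cdot \mid s)$:
\begin{align*}
P(G(X_b) = x \mid S = s)
&= \sum_{x_b} P(X_b = x_b \mid S = s)\, P(X_a = x \mid X_b = x_b) \\
&= \sum_{x_b} P(X_b = x_b \mid S = s)\, P(X_a = x \mid X_b = x_b, S = s) \\
&= P(X_a = x \mid S = s) = P_a(x \mid s),
\end{align*}
where the middle equality is precisely the assumed conditional independence. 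This shows $G$ carries $P_b(\cdot \mid s)$ to $P_a(\cdot \mid s)$ simultaneously for all $s$, so by \Cref{def:more-informative} $P_b$ is at least as informative as $P_a$. Together with $P_a$ being at least as informative as $P_b$ (witnessed by $F$), this contradicts the hypothesis that $P_a$ is \emph{strictly} more informative than $P_b$. Hence $S$ and $X_a$ are dependent given $X_b$.

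Finally I would instantiate \Cref{lemma:processing-increases-conditional-entropy}, taking its stochastic function to be $s \mapsto P_a(\cdot \mid s)$, its $Y$ to be $X_a$, its garbling to be our $F$, and $Z = X_b$; the auxiliary variable $X$ is an independent copy of $X_a$ with the same marginal, so that $\mathbb{E}_X[H(P(S \mid X))] = \mathbb{E}_{X_a}[H(P(S \mid X_a))]$. The dependence of $S$ and $X_a$ given $X_b$ just proved is exactly the hypothesis of the \emph{strict} form of that lemma, which then yields $\mathbb{E}_{X_a}[H(P(S \mid X_a))] < \mathbb{E}_{X_b}[H(P(S \mid X_b))]$, the desired inequality (reading $X = X_a$, $\hat X = X_b$).

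I expect the main obstacle to be the support bookkeeping: because the families $P_a(\cdot \mid s)$ and $P_b(\cdot \mid s)$ may have state-dependent supports, I must guarantee that the reverse kernel $G$ is defined at every $x_b$ reachable under \emph{some} $s$, which is exactly where full support of $S$ is invoked, and I must check that the summations above only ever touch values of $x_b$ where the relevant conditionals exist. A secondary, more routine subtlety is keeping the roles in \Cref{lemma:processing-increases-conditional-entropy} straight—in particular using two independent copies $X, Y$ of $X_a$ so the entropy on the more-informative side is well-defined—but this is immediate once the dependence claim is secured.
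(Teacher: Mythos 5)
Your proposal is correct and matches the paper's own proof essentially step for step: the paper likewise proves the dependence claim by contraposition, constructing the reverse kernel $G(x_b) \sim P(X_a \mid X_b = x_b)$ and showing via the assumed conditional independence that it transports $P_b(\cdot \mid s)$ to $P_a(\cdot \mid s)$ for every $s$, contradicting strict informativeness, and then invokes the strict form of \Cref{lemma:processing-increases-conditional-entropy}. Your extra care with the support bookkeeping and the independent-copy instantiation of that lemma is sound and merely makes explicit what the paper leaves implicit.
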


\begin{proof}
    We prove the following contrapositive: if $X_a$ and $S$ are independent given $X_b$, then $P_b$ is at least as informative as $P_a$.
    If $X_a$ and $S$ are independent given $X_b$, then we have that $P(X_b \mid X_a, S) = P(X_b \mid X_a)$. Thus, for all states $s$, we have that
    \begin{eqnarray*}
        P(X_a\mid s) &=& \sum_{x_b} P(x_b \mid s) P(X_a \mid x_b, s)\\
        &=& \sum_{x_b} P(x_b \mid s) P(X_a\mid x_b).
    \end{eqnarray*}
    But this means that if we sample $X_b$ according to $P_b$, and sample $X_a$ according to $P(X_a\mid x_b)$, then we obtain a sample for $X_a$ according to the distribution $P(X_a\mid s)$ (i.e., $P_a$). Thus, we have that $P_b$ is at least as informative as $P_a$.
\end{proof}

\thmnopolicyobservationinterference*

\begin{proof}
    We will explicitly choose a policy for each time step $t=0,1,2,...$. So let's take $\pi_{0:t-1}^\robot,\pi_{0:t-1}^\human$ as given. Now let $\Pi_t$ be the set of policies at time $t$ that are part of a policy pair $(\pi_{t:}^\human, \pi_{t:}^\robot)$ that is optimal holding fixed $\pi_{0:t-1}^\robot,\pi_{0:t-1}^\human$. Note that the expected utility of policy pairs in a POMDP is continuous. It follows that $\Pi_t$ is closed (i.e., that every convergent sequence of policies in $\Pi_t$ converges to a policy in $\Pi_t$).

    Now from $\Pi_t$ choose $\bar \pi_t^\robot$ as the minimizer of
    \begin{equation*}
        \pi_t^\robot \mapsto \mathbb{E}_{O^\human_{t+1}}\left[ H(P( S_{t+1} \mid O_{t+1}^\human, \pi_{\mathrm{random}}^\human, \pi_{0:t-1}^\robot, \pi^\robot_t)) \mid \pi_{\mathrm{random}}^\human, \pi_{0:t-1}^\robot, \pi^\robot_t \right],
    \end{equation*}
    where $H$ denotes Shannon entropy and $\pi_{\mathrm{random}}^\human$ is the human strategy that chooses uniformly at random. (Note that the above entropy function is not the only function we could use for this proof.) That is, let $\pi_t^\robot$ be the policy that minimizes the entropy of $\human$'s probability distribution over world state. Because the given function is continuous and $\Pi_t$ is closed (and bounded), this minimum exists (by the extreme value theorem).

    Now by \Cref{lemma:more-informative-implies-lower-entropy} we have that if $\pi_t^\robot$ is more informative than $\hat\pi_t^\robot$, then $\pi_t^\robot$ will also have lower entropy at time $t$.
    It follows that there is no policy in $\Pi_t$ that is more informative than $\bar\pi_t^\robot$.

    Finally, it is left to show that there is no policy $\pi_t$ outside of $\Pi_t$ that is more informative than $\pi_t^*$. For this, we use the same argument as in the proof of \Cref{thm:r-no-private-info-implies-no-need-for-interference}: if there were a more informative $\tilde\pi_t^\robot$ with the same effect on state transitions, then this would also be part of an optimal policy pair (constructed by having $\human$ apply the appropriate garbling internally). But we have already that in $\Pi_t$ there is no more informative policy than $\bar\pi_t^\robot$.
\end{proof}

Note that the entropy-minimizing policy used in the proof may still interfere with observations at the \emph{action} level. For example, by default $\human$ might receive a low-information signal about the world. The entropy-minimizing policy might be one in which $\robot$ overwrites this default signal in a way that expresses more information about the world. %
For instance, let's assume that by default, $\human$ observes a random number between $-20$ and $0$ if it's cold outside and a random number between $0$ and $+40$ if it's warm outside. $\robot$ receives various hints about the temperature and can overwrite the signal with an arbitrary number. (I.e., for each number between $-20$ and $+40$, there's an action that sets $\human$'s observation to be that number.) Assuming nothing else happens in this POAG, the entropy-minimizing policies will be ones that overwrite the signal in a way that encodes $\robot$'s information about the temperature. For instance, $\robot$ it may (or may not) be an non-interfering-at-the-policy-level strategy for $\robot$ to overwrite $\human$'s signal with $\robot$'s expectation of the temperature in degrees Celsius.
Given such a policy, the entropy of $\human$'s beliefs about the world is lower than before ($\human$ has more information about the temperature). But each of these overwriting actions individually is observation-interfering.

\section{Formalization of \Cref{ex:preferencequery2} and Proof of \Cref{prop:interference-for-naive-human-robot-communication}}
\label{appendix:proof-of-prop:interference-for-naive-human-robot-communication}

Recall the example:\preferencequeryex*

In particular, there are four possible states:
(1) The first node is GPU-optimized and the second node is CPU-optimized.
(2) The first node is CPU-optimized and the second node is GPU-optimized.
(3) Both nodes are CPU-optimized. The first has an Intel processor, the second has an AMD processor. %
(4) Both nodes are CPU-optimized. The first has an AMD processor and the second has an Intel processor.

Suppose the utilities of the human choice are given as follows: $1$ for the favored CPU-optimized type; $1$ for a GPU-optimized node if $\human$ favors the GPU-optimized node. The reward is $0$ otherwise. On the second step, the reward for the favored type of node is $10$ and $0$ for the other type of node.

Recall the proposition was as follows.

\propinterferencefornaivehumanrobotcommunication*

\begin{proof}[Proof sketch]
    Consider the example. First let's consider a naive human policy, i.e., one that chooses the favorite node type in the first time step. Then the best response for $\robot$ is to interfere.

    It is easy to see that in all optimal policy pairs, $\robot$ must learn about $\human$'s GPU-versus-CPU preference. It follows that at time step 1, $\human$ must deterministically choose depending on her GPU-versus-CPU preference.

    It is easy to see that all of these policy profiles have the same expected reward as the above naive/interference policy pair.

    Note that in the above example, $\robot$ has no private information. It is easy to see that the above argument continues to go through if we allow $\robot$ to send signals to $\human$.
\end{proof}

\section{Formalization of \Cref{ex:boltzmannirrationalv1} and Proof of \Cref{prop:boltzmann-rational-signal-interference}}
\label{appendix:proof-of-prop:boltzmann-rational-signal-interference}

\boltzmannrationality*

\boltzmannirrationalex*

With uniform probability, {\human} is in one of four possible states:
\begin{itemize}
\item Flag 1 is better by a lot: flag 1 has value +7, while flag 2 has value 0.
\item Flag 1 is better by a little: flag 1 has value +1, while flag 2 has value 0.
\item Flag 2 is better by a little: flag 1 has value 0, while flag 2 has value +1.
\item Flag 2 is better by a lot: flag 1 has value 0, while flag 2 has value +7.
\end{itemize}

This gives us the following formalization for the game:
\begin{itemize}
    \item $\statespace=(\{0,1\} \times \{ s_a, s_b, s_c, s_d\}) \cup \{ I, E \}$
    \item $\Hobservationspace= \statespace \cup \{1,2\} \cup \{\mathrm{null}\}$
    \item $\Robservationspace=\Hobservationspace$
    \item $\Theta$ is a singleton
    \item $\Hactionspace=\{1,2\}$
    \item $\Ractionspace=\{\text{\texttt{tldr}},\text{\texttt{man}}\}$
    \item $\human$'s observations are given as follows. For $s\in \{ s_a, s_b, s_c, s_d\}$, we have $O^\human(\Hobservation \mid (0,s), \text{\texttt{man}}, \Haction) = \mathbb{1}[\Hobservation=s]$, and for $i\in \{1,2\}$ we have $O^\human(i \mid (0,s), \text{\texttt{tldr}}, \Haction) = \mathbb{1}[i=1]\mathbb{1}[s\in \{s_a,s_b\}] + \mathbb{1}[i=2]\mathbb{1}[s\in \{s_c,s_d\}]$. Otherwise, $\human$'s observation is deterministically $\mathrm{null}$.
    \item $\robot$'s observations are the same as $\human$'s observations.
    \item The reward is given as follows:
    \begin{align}
        R((1,s_a),1,\Raction) &= 7\\
        R((1,s_b),1,\Raction) &= 1\\
        R((1,s_c),2,\Raction) &= 7\\
        R((1,s_d),2,\Raction) &= 1
    \end{align}
    $\mathbb{1}[\Haction=1]\mathbb{1}[s\in \{s_a,s_b\}] + \mathbb{1}[\Haction=2]\mathbb{1}[s\in \{s_c,s_d\}]$. All other rewards are $0$.
    \item For all $\Haction$, $\Raction$, $T(\cdot \mid I, \Haction, \Raction)$ is the uniform distribution over $\{0\}\times \{ s_a, s_b, s_c, s_d\}$. For all $s\in \{ s_a, s_b, s_c, s_d\}$, $T(s' \mid (0,s), \Haction, \Raction)=\mathbb{1}[s'=(1,s)]$. For all $s$, $T(s' \mid (1,s), \Haction, \Raction)=\mathbb{1}[s'=E]$. Finally, $T(s' \mid E, \Haction, \Raction)=\mathbb{1}[s'=E]$.
\end{itemize}

\boltzmannrationalsignalinterferenceprop*

\begin{proof}
    Note first that multiplying $\beta$ by any positive number has the same effect on Boltzmann-rational strategies as multiplying all rewards by that number. Therefore, we can consider $\beta=1$ without loss of generality. 
    
    Consider \Cref{ex:boltzmannirrationalv1}. Note that $\text{\texttt{tldr}}$ is an observation interference action -- $\text{\texttt{man}}$ results in a more informative signal to $\human$. %

    Now consider the non-interference policy for $\robot$ that always plays $\text{\texttt{man}}$. Then a Boltzmann-rational $\human$ will choose as follows: If she observes $s_a$ or $s_c$, then she will choose an expected utility of $7$ with probability $\propto\exp(7)$ and an expected utility of $0$ with probability $\propto\exp(0)$. Thus, the expected utility is
    \begin{equation}
        7\frac{\exp(7)}{\exp(7) + \exp(0)}
    \end{equation}
    Similarly, if she observes $s_b$ or $s_c$, her expected utility is
    \begin{equation}
        \frac{\exp(1)}{\exp(1) + \exp(0)}.
    \end{equation}
    Thus, overall her expected utility is
    \begin{equation}
        \frac{1}{2} 7\frac{\exp(7)}{\exp(7) + \exp(0)} + \frac{1}{2} \frac{\exp(1)}{\exp(1) + \exp(0)} \approx 3.86234.
    \end{equation}

    Now consider the interference policy for $\robot$ in which $\robot$ always plays \texttt{tldr}. Then upon observing either $0$ or $1$, the human chooses between a utility of $0$ and a utility of $4$. Thus, the expected utility is
    \begin{equation}
        4\cdot \frac{\exp(4)}{\exp(4) + \exp(0)}\approx 3.92806.
    \end{equation}
    We observe that this expected value under interference is higher than the expected value under non-interference.
\end{proof}

\section{Effects of Varying the Boltzmann Rationality Parameter ($\beta$) on the Assistant's Incentives to Interfere with Observations}

\begin{figure}
    \centering
    \includegraphics[width=0.8\linewidth]{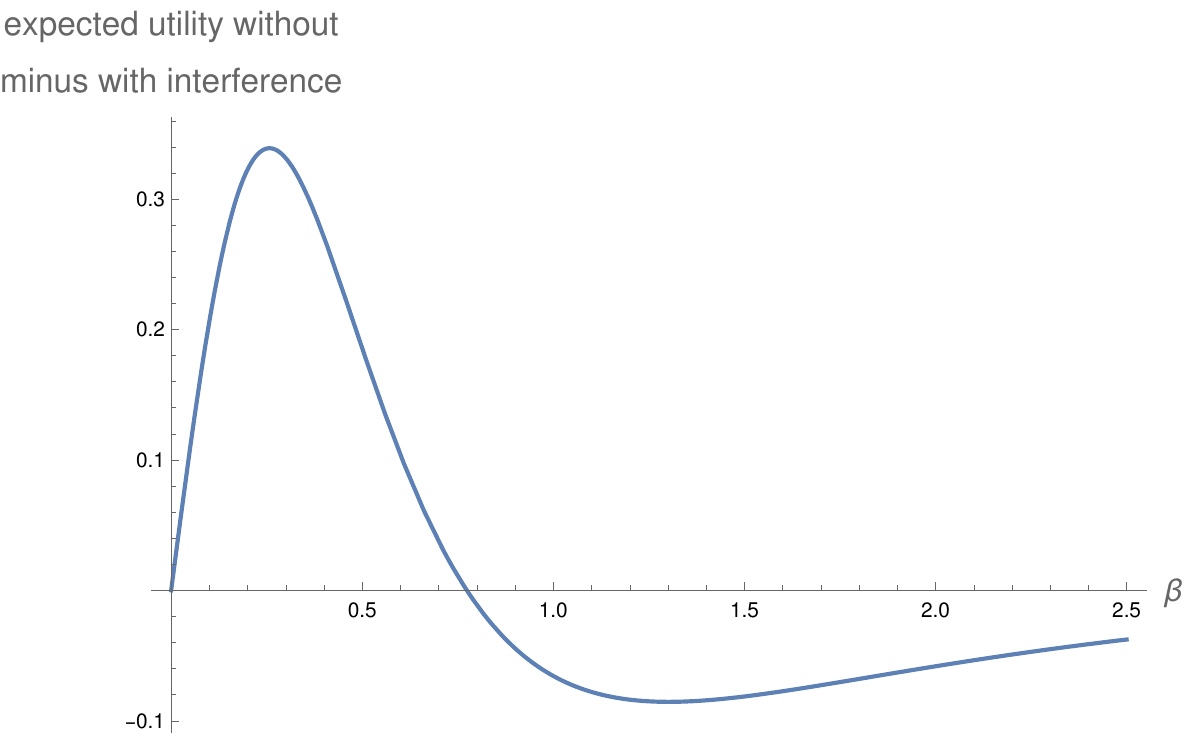}
    \caption{The effect of varying $\beta$ on the assistant's incentive for observation interference in \Cref{ex:boltzmannirrationalv1}. Specifically, the y axis indicates the difference between the expected utility under non-interference minus the expected utility under interference.}
    \label{fig:effect-of-beta}
\end{figure}

As noted in the main text, in \Cref{ex:boltzmannirrationalv1}, we have that for low values of the rationality parameter $\beta$, $\robot$ prefers non-interference, while for large values of $\beta$, $\robot$ prefers interference. Below we will show that in general, counterintuitively, $\robot$ prefers non-interference for sufficiently small (positive) values of $\beta$.

We here only consider the case of a single decision. %
Consider a case with $n$ actions. Let the expected utilities of the different actions without information be $y_{0,1},...,y_{0,n}$. Now imagine that $\human$ might receive $k$ different signals with probabilities $p_1,...,p_k$. Under signal $i\in\{1,...,k\}$, the expected utilities of the different actions become $y_{i,1},...,y^{i,n}$. By the tower rule we must have for each action $a\in\{1,...,n\}$,
\begin{equation}\label{eq:tower-rule}
\sum_{i=1}^k p_i y_{i,a} = y_{0,a}.
\end{equation}

Note that without further restriction, the above setting includes settings in which the signal provides information on what action is best.

For any $\beta$, the expected utility without the signal is
\begin{equation}\label{eq:eu-without-signal}
    \frac{1}{\sum_{a=1}^n \exp(\beta y_{0,a})}\sum_{a=1}^n \exp(\beta y_{0,a})y_{0,a}.
\end{equation}
The expected utility \textit{with} the signal is
\begin{equation}\label{eq:eu-with-signal}
    \sum_{s=1}^k p_{s}\frac{1}{\sum_{a=1}^n \exp(\beta y_{s,a})}\sum_{a=1}^n \exp(\beta y_{s,a})y_{s,a}.
\end{equation}

\begin{proposition}
    For all $(y_{s,a}\in \mathbb{R})_{s\in\{0,1,...,k\},a\in\{1,...,n\}}, (p_s\in \mathbb{R})_{s\in \{0,1,...,k\}}$ satisfying \Cref{eq:tower-rule}, we have that for sufficiently small but positive $\beta$, the expected utility \textit{without} the signal is at most the expected utility \textit{with} the signal.
\end{proposition}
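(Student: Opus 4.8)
The plan is to regard each side of the claimed inequality as the value at $\beta$ of a smooth function of $\beta$ and to compare their Taylor expansions around $\beta = 0$. For a fixed vector of action-utilities $(z_a)_{a=1}^n$, write $V(\beta; z) = \frac{\sum_a e^{\beta z_a} z_a}{\sum_a e^{\beta z_a}}$ for the Boltzmann-weighted expected utility. The key observation is that $V(\beta; z) = \frac{d}{d\beta}\log\!\big(\sum_a e^{\beta z_a}\big)$, so $V$ is analytic in $\beta$, with $V(0;z) = \bar z := \frac1n\sum_a z_a$ and $\frac{\partial}{\partial\beta}V(0;z) = \mathrm{Var}_{\mathrm{unif}}(z) := \frac1n\sum_a z_a^2 - \bar z^2$ (the $\beta$-derivative of $V$ is the variance of the utilities under the Boltzmann distribution, which at $\beta=0$ is the variance under the uniform distribution over actions). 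Expressions~\eqref{eq:eu-without-signal} and~\eqref{eq:eu-with-signal} are exactly $V(\beta; y_0)$ and $\sum_s p_s V(\beta; y_s)$, so the quantity to control is
\[ g(\beta) := \sum_s p_s V(\beta; y_s) - V(\beta; y_0). \]

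Next I would read off the first two Taylor coefficients of $g$ at $0$, treating $p_s \ge 0$ with $\sum_s p_s = 1$ as in the setup. The constant term $g(0) = \sum_s p_s \bar y_s - \bar y_0$ vanishes by averaging the tower rule~\eqref{eq:tower-rule} over actions. For the linear coefficient $g'(0) = \sum_s p_s \mathrm{Var}_{\mathrm{unif}}(y_s) - \mathrm{Var}_{\mathrm{unif}}(y_0)$, set $d_{s,a} = y_{s,a} - y_{0,a}$; the tower rule gives $\sum_s p_s d_{s,a} = 0$ and makes $\sum_s p_s y_{s,a}^2 - y_{0,a}^2$ a genuine variance, so a short computation yields $g'(0) = \sum_s p_s\big(\frac1n\sum_a d_{s,a}^2 - (\frac1n\sum_a d_{s,a})^2\big)$. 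Each summand is the uniform-over-actions variance of $d_{s,\cdot}$, hence nonnegative by Jensen (convexity of $x\mapsto x^2$), so $g'(0)\ge 0$. Equivalently, this is the law of total variance for the joint law in which the action is uniform and the signal is drawn from $p$, independently.

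Finally I would conclude by a dichotomy on $g'(0)$. If $g'(0) > 0$, then since $g(0)=0$ and $g$ is smooth, $g(\beta) = g'(0)\,\beta + o(\beta) > 0$ for all sufficiently small positive $\beta$. If instead $g'(0) = 0$, the per-signal variances must all vanish, forcing $d_{s,\cdot}$ to be constant in $a$ for every $s$ with $p_s>0$; that is, each signal only shifts all action-utilities by a common constant $c_s$ with $\sum_s p_s c_s = 0$. Because the Boltzmann distribution is invariant under adding a constant to every utility, the induced action distribution under each signal coincides with the no-signal distribution, whence $V(\beta; y_s) = V(\beta; y_0) + c_s$ and $g(\beta) \equiv 0$. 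In both cases $g(\beta)\ge 0$ for small positive $\beta$, which is the claim.

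I expect the main obstacle to be precisely the degenerate case $g'(0)=0$: a first-order argument alone cannot conclude there, and one must recognize that vanishing of the linear coefficient is not accidental but exactly characterizes signals that are uninformative about the \emph{relative} ranking of actions, for which $g$ vanishes identically. Verifying that $g'(0)\ge 0$ reduces cleanly to a per-signal Jensen inequality (rather than something requiring the full coupling between the action distribution and the signal) is the crucial technical point; the remaining differentiation of the log-partition function is routine.
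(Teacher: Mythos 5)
Your proposal is correct, and its core is the same as the paper's: both arguments compare the with-signal and without-signal Boltzmann values at $\beta=0$ (where they coincide by averaging \Cref{eq:tower-rule} over actions) and then compare first derivatives there, identifying each derivative as a uniform-over-actions variance and reducing the comparison to convexity of the square, i.e., the law of total variance. Two differences are worth noting. First, where the paper proves the derivative comparison as a Jensen inequality, you derive the exact identity $g'(0)=\sum_s p_s\,\mathrm{Var}_{\mathrm{unif}}\left(y_{s,\cdot}-y_{0,\cdot}\right)$ (valid because $\sum_s p_s (y_{s,a}-y_{0,a})=0$ kills the cross terms), which is slightly sharper and makes the equality case transparent. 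Second---and this is a genuine improvement---you handle the degenerate case $g'(0)=0$. The paper's proof asserts that it suffices to show the with-signal derivative exceeds the without-signal one, but the convexity step only yields a non-strict inequality, and when the two derivatives are equal a first-order expansion alone cannot certify $g(\beta)\ge 0$ for small $\beta>0$. Your observation that vanishing of the per-signal variances forces each signal to shift all action utilities by a common constant $c_s$ with $\sum_s p_s c_s = 0$, combined with the shift-invariance of the softmax distribution, shows $g\equiv 0$ in that case and closes the gap. So your argument is not merely a match for the paper's proof but a small repair of it.
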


\begin{proof}
It's easy to see that for $\beta=0$, the two expected utilities are the same. Thus, all we need to show is that the derivative w.r.t.\ $\beta$ of the term in Eq.\ \ref{eq:eu-with-signal} at $\beta=0$ exceeds the corresponding derivative of the term in Eq.\ \ref{eq:eu-without-signal}.

The derivative w.r.t.\ $\beta$ at $\beta=0$ of the term in \Cref{eq:eu-without-signal} is
\begin{equation}\label{eq:derivative-eu-without-signal}
    \left(\sum_{a=1}^n \frac{1}{n} y_{0,a}^2\right) - \left(\sum_{a=1}^n \frac{1}{n} y_{0,a}\right)^2. 
\end{equation}
Note that this is exactly the variance of a random variable that is uniform over $(y_{0,a})_{a=1,...,n}$. %

Similarly, the derivative of the term in \Cref{eq:eu-with-signal} is
\begin{equation}\label{eq:derivative-eu-with-signal}
\sum_{s=1}^k p_s \left(\left(\sum_{a=1}^n \frac{1}{n} y_{s,a}^2\right) - \left(\sum_{a=1}^n \frac{1}{n} y_{s,a}\right)^2\right).
\end{equation}
Note that this is the weighted average (over $s$) of the uniform random variables over $(y_{s,a})_{a=1,...,n}$.

We can now prove the claimed inequality using the convexity of the square function, \Cref{eq:tower-rule} and some basic term manipulation.
\begin{align}
    &\quad\quad \left(\sum_{a=1}^n \frac{1}{n} y_{0,a}^2\right) - \left(\sum_{a=1}^n \frac{1}{n} y_{0,a}\right)^2\\
    &=\quad\quad \sum_{a=1}^n \frac{1}{n} \left(y_{0,a}^2 - \frac{1}{n}\left(\sum_{a'=1}^n y_{0,a'}\right)^2\right)\\
    &=\quad\quad \sum_{a=1}^n \frac{1}{n} \left(y_{0,a} - \frac{1}{n}\sum_{a'=1}^n y_{0,a'}\right)^2 \label{eq:two-variance-defs-1}
    \\
    &\underset{\mathclap{\text{\Cref{eq:tower-rule}}}}{=}\quad\quad \sum_{a=1}^n \frac{1}{n} \left(\left(\sum_{s=1}^k p_sy_{s,a}\right) - \frac{1}{n}\sum_{a'=1}^n \sum_{s=1}^k p_s y_{s,a'}\right)^2\\
    &=\quad\quad \sum_{a=1}^n \frac{1}{n} \left( \sum_{s=1}^k p_s \left( y_{s,a} - \frac{1}{n}\sum_{a'=1}^n y_{s,a'}\right) \right)^2\\
    &\underset{\mathclap{(\cdot)^2\text{ is convex}}}{\leq}\quad\quad \sum_{a=1}^n \frac{1}{n} \sum_{s=1}^k p_s \left( y_{s,a} - \frac{1}{n}\sum_{a'=1}^n y_{s,a'} \right)^2\\
    &= \quad\quad \sum_{s=1}^k p_s \sum_{a=1}^n \frac{1}{n} \left( y_{s,a} - \frac{1}{n}\sum_{a'=1}^n y_{s,a'} \right)^2\\
    &= \quad\quad \sum_{s=1}^k p_s \left(\sum_{a=1}^n \frac{1}{n} y_{s,a}^2 - \left(\sum_{a=1}^n \frac{1}{n} y_{s,a}\right)^2\right). \label{eq:two-variance-defs-2}
\end{align}
We have skipped over some term manipulations in \Cref{eq:two-variance-defs-1,eq:two-variance-defs-2}, both of which are essentially the equality of two definitions of the variance: $\mathrm{Var}(X)=\left(X-\mathbb[X]\right)^2$ and $\mathrm{Var}(X)=\mathbb{E}[X^2]-\mathbb{E}[X]^2$.
\end{proof}

It's interesting to note that this is essentially the proof that the variance (over $a$) of the expectation (over $s$) is at least the expectation (over $s$) of the variance (over $a$).

Second, we want to show that for large $\beta$, $\robot$ prefers observation interference, i.e., prefers to have the human choose based on the expected utilities $y_{0,1},...,y_{0,n}$ rather than the expected utilities that arise from further signals. However, for this to hold we need a further condition. Note that in the general formalism above, the signal $s$ may provide information about which action is best. If this is the case, then it is easy to show that for large enough $\beta$, $\robot$ will prefer providing the signal. However, consider specifically those cases in which the signal $s$ only provides information about how much better the best action is compared to other actions. Therefore, we require in the following result that the best action is the same (WLOG $1$) across $s$.

\begin{proposition}\label{prop:learning-info-bad-at-high-beta}
Let $(y_{s,a}\in \mathbb{R})_{s\in S,a\in\{1,...,n\}}, (p_s\in \mathbb{R})_{s\in S}$ satisfy \Cref{eq:tower-rule} and let $y_{s,0}>y_{s,a}$ for all $s\in\{0\} \cup S,a\in\{1,...,n\}$. Then for all sufficiently large $\beta$ we have that the expected utility \textit{without} the signal is at most the expected utility \textit{with} the signal. The inequality is strict if the signal is non-trivial (i.e., $y_{s,a}$ is not constant across $s$ for some $a$).
\end{proposition}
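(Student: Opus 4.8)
The plan is to prove the statement in the direction indicated by its label \Cref{prop:learning-info-bad-at-high-beta} and by the preceding discussion (``we want to show that for large $\beta$, $\robot$ prefers observation interference''): namely, that for large $\beta$ the expected utility \emph{with} the extra signal is at most the expected utility \emph{without} it, so that revealing the signal cannot help—and strictly hurts when the signal is non-trivial—reading the statement's ``at most'' as ``at least.'' Writing $g_\beta(y)=\frac{\sum_a y_a e^{\beta y_a}}{\sum_a e^{\beta y_a}}$ for the softmax value of a utility vector, the two quantities to compare are $g_\beta(y_0)$ (\eqref{eq:eu-without-signal}) and $\sum_s p_s g_\beta(y_s)$ (\eqref{eq:eu-with-signal}). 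The first reduction is to subtract off the common best-action value: since the best action is uniform across $s$ (relabel it as action $1$), I define the \emph{defect} $D_\beta(y):=y_1-g_\beta(y)=\frac{\sum_{a\ne1}\Delta_a e^{-\beta\Delta_a}}{1+\sum_{a\ne1}e^{-\beta\Delta_a}}$, where $\Delta_a:=y_1-y_a>0$. The tower rule (\Cref{eq:tower-rule}) gives $y_{0,1}=\sum_s p_s y_{s,1}$, so
\[
g_\beta(y_0)-\sum_s p_s g_\beta(y_s)=\sum_s p_s D_\beta(y_s)-D_\beta(y_0),
\]
and it suffices to show this is nonnegative, and strictly positive for non-trivial signals, once $\beta$ is large.

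The engine of the argument is that as $\beta\to\infty$ each defect is governed by its \emph{smallest} gap. Set $\delta_0:=\min_{a\ne1}\Delta_{0,a}$ and $\delta^*:=\min_{s:\,p_s>0,\,a\ne1}\Delta_{s,a}$. The tower rule gives $\Delta_{0,a}=\sum_s p_s\Delta_{s,a}$, a convex combination of quantities each $\ge\delta^*$, so $\Delta_{0,a}\ge\delta^*$ for every $a$ and hence $\delta_0\ge\delta^*$. A direct expansion of $D_\beta$ yields $D_\beta(y)\sim c(y)\,e^{-\beta\,\delta(y)}$ with $\delta(y)=\min_{a\ne1}\Delta_a$ and $c(y)=\delta(y)\cdot|\{a\ne1:\Delta_a=\delta(y)\}|>0$; summing, $\sum_s p_s D_\beta(y_s)\sim C^*e^{-\beta\delta^*}$ and $D_\beta(y_0)\sim c_0 e^{-\beta\delta_0}$. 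In the generic case $\delta_0>\delta^*$, the no-signal defect is exponentially smaller than the signal defect, so $\sum_s p_s D_\beta(y_s)>D_\beta(y_0)$ for all large $\beta$, which is exactly the desired strict inequality.

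The boundary case $\delta_0=\delta^*=:\delta$ requires comparing leading coefficients. The key observation is that any $a$ with $\Delta_{0,a}=\delta$ must, because $\delta=\sum_s p_s\Delta_{s,a}$ is a convex combination of values all $\ge\delta$, satisfy $\Delta_{s,a}=\delta$ for every $s$ with $p_s>0$; hence every minimal-gap action of $y_0$ is a minimal-gap action of each realized $y_s$. This inclusion gives $C^*\ge c_0$ and thus the non-strict inequality. I expect the \textbf{main obstacle} to be strictness in the fully degenerate sub-case $C^*=c_0$ (the minimal-gap action sets coincide exactly across all signals): there the leading exponential terms cancel, so strictness must be extracted from a higher-order expansion, and one must verify that the positive next-order contribution is not overwhelmed by the denominator corrections $1+\sum_{a\ne1}e^{-\beta\Delta_a}$. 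The clean structural fact I would lean on is the strict convexity of $\phi_\beta(t)=t e^{-\beta t}$ on $t>2/\beta$ (valid once $\beta>2/\delta^*$, since all gaps then exceed $2/\beta$): applying Jensen with weights $p_s$ to the numerators $\sum_{a\ne1}\phi_\beta(\Delta_a)$ and using that a non-trivial signal forces some $\Delta_{s,a}$ to vary across $s$ shows the signal-side numerator is strictly larger. I would then finish by expanding $D_\beta$ in powers of the dominant small quantity $e^{-\beta\delta^*}$ and matching terms by exponential order, which is where the bulk of the remaining technical work lies.
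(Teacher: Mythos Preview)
Your reading of the direction is right: the statement's ``at most'' is a slip, and both the label and the paper's own proof establish that the gain from the signal is negative for large~$\beta$.

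Your approach is correct and shares the paper's core asymptotic insight---that everything is governed by the smallest gaps $\Delta_{s,a}$---but it is organized differently. You subtract off the best action's value to reduce the question to $\sum_s p_s D_\beta(y_s)\ge D_\beta(y_0)$, then compare minimum gaps $\delta^*\le\delta_0$ via the tower rule and invoke strict convexity of $t\mapsto te^{-\beta t}$ (Jensen) for the numerators. The paper instead writes the gain directly, introduces an integer $k$ for the number of top actions whose pairwise gaps are constant across signals, isolates a signal $\tilde s$ at which the $(k{+}1)$-th gap is strictly smaller than under $y_0$ (their Lemma), and then splits the gain into three pieces $A,B,C$: $A$ is the loss from shifting mass off the top $k$ actions under $\tilde s$, while $B,C$ are shown to be of strictly smaller exponential order. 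What your proposal calls the ``fully degenerate sub-case'' ($\delta_0=\delta^*$ with $A_s=A_0$ for all $s$) is exactly what the paper's parameter $k$ absorbs automatically: by jumping straight to the first action whose gap genuinely varies, the paper never has to peel off cancelling leading terms. Your route is conceptually cleaner in the generic case and makes the convexity structure explicit; the paper's $k$-indexed decomposition is more \emph{ad hoc} but sidesteps the higher-order expansion you flag as the main remaining work. Either way the denominator corrections are one exponential order smaller than the surviving numerator term (they pick up an extra factor $e^{-\beta\delta^*}$), so your outlined finish goes through.
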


We first provide a very rough sketch. For simplicity, let's say that the signal provides evidence about how much better the first action is compared to the second-best action. Then sometimes the signal will \textit{decrease} the difference in expected utility between the best and second-best utility. We will show that as $\beta\rightarrow \infty$, the overall effect of learning the information is dominated by taking the best action \textit{less} in this case.

We will use the following lemmas.

\begin{lemma}\label{lemma:e3}
    Let the differences between the top $k$ actions be constant across signals and let the difference to the $k+1$-th action be non-constant. Then there is a signal $\tilde s$ s.t.\ the difference to the $k+1$-th action decreases under that signal.
\end{lemma}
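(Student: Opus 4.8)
The plan is to reduce the claim to a one-line averaging argument resting on the tower rule (\Cref{eq:tower-rule}). The first step is to fix the right scalar to track. Because the differences among the top $k$ actions are assumed constant across signals, the whole top-$k$ block shifts rigidly from one signal to the next, so it suffices to follow a single gap. I would take the best action to be action $1$ (as in the proof sketch, WLOG) and set $D_s \coloneqq y_{s,1} - y_{s,k+1}$, the gap between the top action and the $(k+1)$-th action under signal $s$, together with the no-signal baseline $D_0 \coloneqq y_{0,1} - y_{0,k+1}$. Under this reading, ``the difference to the $(k+1)$-th action is non-constant'' is exactly the statement that $D_s$ is not constant over the signals of positive probability, and ``the difference decreases under $\tilde s$'' means $D_{\tilde s} < D_0$. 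Tracking $y_{s,k}-y_{s,k+1}$ instead would change $D_s$ only by the constant top-$k$ gap, so the argument is unaffected.

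The second step is the only place the tower rule enters. Applying \Cref{eq:tower-rule} to action $1$ and to action $k+1$ and subtracting gives
\begin{equation*}
\sum_{s} p_s D_s = \sum_s p_s y_{s,1} - \sum_s p_s y_{s,k+1} = y_{0,1} - y_{0,k+1} = D_0,
\end{equation*}
so the $p$-weighted average of the post-signal gaps equals the baseline gap $D_0$.

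The final step is the elementary observation that a probability-weighted average cannot weakly dominate all of its entries unless they are all equal on the support. Concretely, I would argue by contradiction: if $D_s \ge D_0$ for every $s$ with $p_s > 0$, then $D_0 = \sum_s p_s D_s \ge \sum_s p_s D_0 = D_0$, and strict positivity of the weights on the support forces $D_s = D_0$ throughout, contradicting non-constancy. Hence some $\tilde s$ with $p_{\tilde s} > 0$ satisfies $D_{\tilde s} < D_0$, which is the claim. There is no deep obstacle here; the only care required is in reading the hypotheses precisely — making explicit that ``constant''/``non-constant'' are quantified over signals of positive probability (so that zero-probability signals cannot break the strict-positivity step) and that ``decreases'' is measured against the no-signal baseline $D_0$ — and in confirming that the constancy of the top-$k$ differences is genuinely what licenses collapsing everything to the single scalar $D_s$ and is not needed elsewhere in this lemma.
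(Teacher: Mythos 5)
Your proof is correct and takes essentially the same route as the paper's: both apply the tower rule (\Cref{eq:tower-rule}) linearly to the gap between a top-$k$ action and the $(k+1)$-th action, then observe that a non-constant quantity whose $p$-weighted average equals the baseline must fall strictly below it for some signal. The only cosmetic difference is that the paper anchors the gap at the $k$-th best action while you anchor it at the best action---equivalent under the constant-top-$k$ hypothesis, as you note---and your write-up makes explicit the averaging/contradiction step that the paper leaves as a parenthetical remark.
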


\begin{proof}
    Let $k-1$ be the $k$-th best action according to $0$ and let $k$ be the $k+1$-th best action according to $0$.
    By the tower rule (Eq.\ \ref{eq:tower-rule}),
    $y_{0,k-1} - y_{0, k}$ must be greater than 
    $y_{s,k-1} - y_{s, k}$ for some $s$. (If the difference in these expected utilities changes when the signal is observed, then it must sometimes decrease.)
    But then in cases where this difference decreases as $s$ is observed, we clearly have that the difference between one of the $k$ best actions to the $k+1$-th best action under $s$ also decreases.
\end{proof}

\begin{proof}[Proof of \Cref{prop:learning-info-bad-at-high-beta}]
    The gain from obtaining the signal is:
    \begin{equation*}
        \sum_s p_s \sum_a \left(\frac{\exp(\beta y_{s,a})}{\sum_{a'} \exp(\beta y_{s,a'})} - \frac{\exp(\beta y_{0,a})}{\sum_{a'} \exp(\beta y_{0,a'})}\right) y_{s,a}. 
    \end{equation*}
    WLOG let $0$ be the best action under all signals, $1$ the second-best and so on.
    Let $k$ be the largest number that the differences between the utilities of actions $0,...,k-1$ are always the same. (Typically $k=0$.) Let $\tilde S$ be the set of signals under which the difference to the utility of $k$ (the $k+1$-th best action) is minimized. Note that in particular, the difference must be smaller than under $0$ by \Cref{lemma:e3}. WLOG assume that for all signals, $k$ is among the $k+1$-th best actions.

    WLOG assume that $y_{s,a}>0$ for all $s\in \{0\} \cup S$ and all $a$ and that $y_{s,0}$ is constant across $s$.

    Now we will divide up the above sum into three components:
    \begin{itemize}
        \item[A] The change (decrease) in utility from playing the top $k$ actions less in $\tilde S$ than without the signal.
        \begin{equation*}
            A \coloneqq \sum_{\tilde s \in \tilde S} p_{\tilde s} \sum_{a=0,...,k-1}\left(\frac{\exp(\beta y_{\tilde s,a})}{\sum_{a'} \exp(\beta y_{\tilde s,a'})} - \frac{\exp(\beta y_{0,a})}{\sum_{a'} \exp(\beta y_{0,a'})}\right) y_{\tilde s,a}
        \end{equation*}
        \item[B] The change in utility from the changes in distribution of all actions other than the top $k$ under $\tilde S$ versus $S$
        \begin{equation*}
            B \coloneqq \sum_{\tilde s \in \tilde S} p_{\tilde s} \sum_{a=k,k+1,...}\left(\frac{\exp(\beta y_{\tilde s,a})}{\sum_{a'} \exp(\beta y_{\tilde s,a'})} - \frac{\exp(\beta y_{0,a})}{\sum_{a'} \exp(\beta y_{0,a'})}\right) y_{\tilde s,a}
        \end{equation*}
        \item[C] The change in utility from all signals other than $\tilde S$, i.e.
        \begin{equation*}
            \sum_{s\notin \tilde S} p_s \sum_a \left(\frac{\exp(\beta y_{s,a})}{\sum_{a'} \exp(\beta y_{s,a'})} - \frac{\exp(\beta y_{0,a})}{\sum_{a'} \exp(\beta y_{0,a'})}\right) y_{s,a}.
        \end{equation*}
    \end{itemize}

    We will show that the effect from $A$ (which is negative) is becomes infinitely much larger than the effect from $B$ and $C$ (in absolute terms).
    From that it will follow that the original sum, which is equal to $A+B+C$ is negative as $\beta\rightarrow \infty$.

    We first provide a bound on $A$. We first show that $A<0$. To show this, note first that in all enumerators in $A$, we can replace $y_{\tilde s, a}$ with $y_{0,a}$ (by choice of $\tilde s$ and $k$). So all we need to show is that the second denominator is smaller than the first, i.e., $\sum_{a'} \exp(\beta y_{\tilde s,a'}) > \sum_{a'} \exp(\beta y_{0,a'})$. But this this is easy to see from the fact that $y_{\tilde s, a} = y_{0, a}$ for $a=0,1...,k-1$ and $y_{\tilde s, k} > y_{0, k}$. For large $\beta$, $\exp(\beta y_{\tilde s, k})$ will be much larger than $\sum_{a'=k,k+1,...} \exp(\beta y_{0, a'})$.

    Next, we will provide a lower bound on the absolute value of $|A|$.
    \begin{eqnarray*}
        A &=& \sum_{\tilde s \in \tilde S} p_{\tilde s} \sum_{a=0,...,k-1}\left(\frac{\exp(\beta y_{\tilde s,a})}{\sum_{a'} \exp(\beta y_{\tilde s,a'})} - \frac{\exp(\beta y_{0,a})}{\sum_{a'} \exp(\beta y_{0,a'})}\right) y_{\tilde s,a} \\
        &=& \sum_{\tilde s \in \tilde S} p_{\tilde s} \sum_{a=0,...,k-1}\exp(\beta y_{0,a})\left(\frac{1}{\sum_{a'} \exp(\beta y_{\tilde s,a'})} - \frac{1}{\sum_{a'} \exp(\beta y_{0,a'})}\right) y_{0,a}\\
        &=& \sum_{\tilde s \in \tilde S} p_{\tilde s} \sum_{a=0,...,k-1}\exp(\beta y_{0,a})\left(\frac{1}{\sum_{a'} \exp(\beta y_{\tilde s,a'})} - \frac{1}{\sum_{a'} \exp(\beta y_{0,a'})}\right) y_{0,a}\\
        &=& \sum_{\tilde s \in \tilde S} p_{\tilde s} \sum_{a=0,...,k-1}\exp(\beta y_{0,a})\frac{\left(\sum_{a'} \exp(\beta y_{0,a'})\right) - \sum_{a'} \exp(\beta y_{\tilde s,a'})}{\left(\sum_{a'} \exp(\beta y_{\tilde s,a'})\right) \left(\sum_{a'} \exp(\beta y_{0,a'})\right)} y_{0,a}\\
        &=& \sum_{\tilde s \in \tilde S} p_{\tilde s} \sum_{a=0,...,k-1}\exp(\beta y_{0,a})\frac{\left(\sum_{a=k,k+1,...} \exp(\beta y_{0,a'})\right) - \sum_{a=k,k+1,...} \exp(\beta y_{\tilde s,a'})}{\left(\sum_{a'} \exp(\beta y_{\tilde s,a'})\right) \left(\sum_{a'} \exp(\beta y_{0,a'})\right)} y_{0,a}\\
        &\leq & \sum_{\tilde s \in \tilde S} p_{\tilde s} \sum_{a=0,...,k-1}\exp(\beta y_{0,a})\frac{n \exp(\beta y_{0,k}) - \exp(\beta y _{\tilde s,k})}{n^2 \exp (\beta y_{0,a})^2} y_{0,a}\\
        &= & \sum_{\tilde s \in \tilde S} p_{\tilde s} \sum_{a=0,...,k-1}\frac{n \exp(\beta y_{0,k}) - \exp(\beta y _{\tilde s,k})}{n^2 \exp (\beta y_{0,a})} y_{0,a}\\
        &\leq & - \frac{1}{2} \sum_{\tilde s \in \tilde S} p_{\tilde s} \sum_{a=0,...,k-1}\frac{\exp(\beta y _{\tilde s,k})}{n^2 \exp (\beta y_{0,a})} y_{0,a}\\
        &\leq & - \frac{1}{2} \sum_{\tilde s \in \tilde S} p_{\tilde s} \frac{\exp(\beta y _{\tilde s,k})}{n^2 \exp (\beta y_{0,0})} y_{0,0}\\
    \end{eqnarray*}

    Next we upper bound $B$. First, the best case for the effect on ... is that all the probability mass that under $0$ is on the top $k$ actions ends up on the $k$-th best action, i.e., 
    \begin{equation*}
        B \leq \sum_{\tilde s \in \tilde S} p_{\tilde s} \left( 1 - \sum_{a=0,...,k-1} \frac{\exp(\beta y_{0,a})}{\sum_{a'} \exp(\beta y_{0,a'})}\right) y_{\tilde s, k}.
    \end{equation*}

    We can further upper-bound this as follows:
    \begin{eqnarray*}
        && \sum_{\tilde s \in \tilde S} p_{\tilde s} \left( 1 - \sum_{a=0,...,k-1} \frac{\exp(\beta y_{0,a})}{\sum_{a'} \exp(\beta y_{0,a'})}\right) y_{\tilde s, k}\\
        &=& \sum_{\tilde s \in \tilde S} p_{\tilde s} \sum_{a=k,...} \frac{\exp(\beta y_{0,a})}{\sum_{a'} \exp(\beta y_{0,a'})} y_{\tilde s, k}\\
        &\leq & \sum_{\tilde s \in \tilde S} p_{\tilde s} \frac{n \exp(\beta y_{0,k})}{\exp(\beta y_{0,0})} y_{\tilde s, k}\\
    \end{eqnarray*}
    From the fact that $y_{\tilde s, k} > y _{0,k}$, it is easy to see that this term vanishes in absolute value relative to our upper bound on $A$.

    Finally, we must upper bound $C$. First, we can upper bound $C$ by considering a case where all probability mass that in $0$ was outside the top $k$ actions, goes to the best action when a signal outside of $\tilde S$ is observed, i.e.,
    \begin{equation*}
        C \leq \sum_{s\notin \tilde S} p_s \sum_{a=k,k+1,...} \frac{\exp(\beta y_{0,a})}{\sum_{a'} \exp (\beta y_{0,a'})} y_{0,0}.
    \end{equation*}
    We can further upper bound this as follows:
    \begin{eqnarray*}
        && \sum_{s\notin \tilde S} p_s \sum_{a=k,k+1,...} \frac{\exp(\beta y_{0,a})}{\sum_{a'} \exp (\beta y_{0,a'})} y_{0,0}\\
        &\leq & \sum_{s\notin \tilde S} p_s \frac{n\exp(\beta y_{0,k})}{\exp (\beta y_{0,0})} y_{0,0}\\
    \end{eqnarray*}
    Again, from the fact that $y_{\tilde s, k} > y _{0,k}$, it is easy to see that this term vanishes in absolute value relative to our upper bound on $A$.
\end{proof}

\section{Proof of {\robot}'s Best Response in the Product Selection Game}

\Rinterferingbestresponseprp*

\begin{proof}
Consider {\robot}'s perspective. {\robot}'s interference is equivalent to selecting a set of $d - k$ untampered products from which {\human} selects according to a Boltzmann distribution on $H_i$. As {\robot} neither sees nor affects the $H_i$, by symmetry, over all draws of the game, {\human} selects each of the $d - k$ products with equal probability. {\robot}'s expected payoff for choosing $d - k$ products, then, is the uniform average of the products' expected $U_i$.

How does {\robot} choose the set of $d - k$ products to maximize the uniform average of the products' expected $U_i$? Recall $U_i = H_i + R_i$. As {\robot} neither sees nor affects the $H_i$, {\robot} can ignore the $H_i$ and consider only the $R_i$. Denote the expected $R_i$ by $\hat{R}_i = \mathbb{E}[R_i]$. If {\robot} observes $R_i$, then $\hat{R}_i = R_i$. If {\robot} doesn't observe $R_i$, then $\hat{R}_i = 0.5$. To choose the \emph{maximum} $d - k$ values for $\hat{R_i}$, {\robot} interferes with the \emph{minimum} $k$ values of $\hat{R}_i$. %
\end{proof}

\section{Minor Deficiencies of the Observation Interference Definition}
\label{appendix:minor-deficiencies-of-interference-definition}

As noted in the main text, there are various possible concerns with \Cref{def:observation-interference} that we consider minor because they do not change the main ideas and results of this paper.

\begin{itemize}
    \item The definition does not take into account what $\robot$ knows about what $\human$ already knows. As such, it will sometimes spuriously judge a policy to be observation interference for taking away a signal from the human that is redundant with the human's past observations. For example, if the human observes the Linux version at time $t$ and the Linux is known not to change, then preventing the human from observing the Linux version again at time $t+1$ might count as observation interference.
    
    The definition may also spuriously judge a policy to \textit{not} be observation interference because the only more informative policies fail to provide some redundant piece of information to the human. For instance, let's say that by default the human learns some new, useful information at time $t+1$. Now let's say that $\robot$ can make it so that $\human$ instead observes the Linux version (which $\human$ already knows). Assume that $\robot$ has no way of letting $\human$ see \textit{both} the Linux version \textit{and} the new, useful information. Then making the human observe the Linux would \textit{not} count as sensor interference according to our definition, because our definition doesn't take into account that the human already knows the Linux version.

    Adapting the definition to fix this deficiency is somewhat cumbersome, because it requires us to reason about $\robot$'s beliefs about $\human$'s observation histories/beliefs.

    This aspect of the definition seems mostly irrelevant for our results. For instance, none of our examples of observation interference have redundant observations. Therefore, we have opted to keep the definition simple in this paper.
    \item Our definition only compares pure actions in terms of their informativeness. But it may be the case that one action $\hat a ^ \robot$ is, in some intuitive sense, interferring with $\human$'s observations but the only way to show this is to compare $\hat a$ with a mix of actions, say, mixing uniformly over $a_1^ \robot$ and $a_2^ \robot$. In particular, it may be that $\hat a$ has the same effect on state transitions as mixing uniformly over $a_1^ \robot$ and $a_2^ \robot$, while reducing the informativeness of the $\human$'s observation.
    It's easy to extend the definition to also consider mixed actions, but the extension has no impact on any of our results.
    \item Neither the action-level nor the policy-level notion of tampering is sensitive to what policy $\human$ plays or even what policy $\human$ might plausibly play. For instance, let's say there is some action $a_{\text{silly}}^\human$ for $\human$ that it never makes sense for $\human$ to play. (In game-theoretic terms, it might be strictly dominated.) Then whether any given policy $\Rpolicy$ is tampering will be sensitive to what happens if $\robot$ plays $\Rpolicy$ and $\human$ plays $a_{\text{silly}}^\human$. Arguably this shouldn't matter; arguably we should assume some degree of rationality on behalf of $\human$.

    To refine this definition, we would need to restrict attention to specific policies or actions for $\human$. It's not clear which restriction makes most sense. In any case, we cannot imagine a refinement of the definition that would have little impact on our results.
\end{itemize}

\section{Code Assets}
\label{app:code-assets}

Our experiments use the Python software libraries Matplotlib \citep{Hunter:2007}, NumPy \citep{harris2020array}, pandas \citep{reback2020pandas,mckinney-proc-scipy-2010}, and seaborn \citep{Waskom2021}.

\newpage

\end{document}